\theoremstyle{plain}
\newtheorem{theorem}{Theorem}[section]
\newtheorem{proposition}[theorem]{Proposition}
\newtheorem{corollary}[theorem]{Corollary}
\newtheorem{lemma}[theorem]{Lemma}
\newtheorem{claim}[theorem]{Claim}
\newcommand{\RR}{\mathbb{R}}
\newcommand{\X}{\textbf{X}}
\newcommand{\G}{\mathcal{G}}
\newcommand{\C}{\mathbf{C}}
\newcommand{\ys}[1]{%
\textcolor{magenta}{[Yon:] \textit{#1}: }%
}
\def\eqref#1{equation~\ref{#1}}
\def\1{\bm{1}}
\newcommand{\CC}{\mathbb{C}}
\newcommand{\B}{\mathcal{B}}
\newcommand{\W}{\mathcal{W}}
\newcommand{\V}{\mathcal{V}}
\newcommand{\R}{\mathcal{R}}
\newcommand{\U}{\mathcal{U}}
\newcommand{\K}{\mathcal{K}}
\newcommand{\p}{\textbf{p}}
\renewcommand{\H}{\mathcal{H}}
\renewcommand{\S}{\mathcal{S}}
\DeclareMathAlphabet{\mathsfit}{\encodingdefault}{\sfdefault}{m}{sl}
\SetMathAlphabet{\mathsfit}{bold}{\encodingdefault}{\sfdefault}{bx}{n}
\newcommand{\Note}[1]{}
\renewcommand{\Note}[1]{#1}  
\theoremstyle{remark}
\definecolor{softblue}{rgb}{0.1, 0.1, 0.9} 
\newcommand{\Scalar}{\mathbf{S}}
\newcommand{\Vector}{\mathbf{V}}
\newcommand{\Matrix}{\mathbf{M}}
\title{Revisiting Multi-Permutation Equivariance through the Lens of Irreducible Representations}
\author{Yonatan Sverdlov\textsuperscript{1,*},
        Ido Springer\textsuperscript{1,*}
        \& Nadav Dym\textsuperscript{1,2} \\
\textsuperscript{1}Faculty of Mathematics\\
\textsuperscript{2}Faculty of Computer Science\\
Technion -- Israel Institute of Technology\\
\texttt{nadavdym@technion.ac.il} \\
\texttt{\{yonatans,ido.springer\}@campus.technion.ac.il}\\
}
\begin{document}

\maketitle
\footnotetext[1]{Equal contribution}

\begin{abstract}
This paper explores the characterization of equivariant linear layers for representations of permutations and related groups.
Unlike traditional approaches, which address these problems using parameter-sharing, we consider an alternative methodology based on irreducible representations and Schur's lemma. Using this methodology, we obtain an alternative derivation for existing models like DeepSets, 2-IGN graph equivariant networks, and Deep Weight Space (DWS) networks. The derivation for DWS networks is significantly simpler than that of previous results.

Next, we extend our approach to unaligned symmetric sets, where equivariance to the wreath product of groups is required. Previous works have addressed this problem in a rather restrictive setting, in which almost all wreath equivariant layers are Siamese. In contrast, we give a full characterization of layers in this case and show that there is a vast number of additional non-Siamese layers in some settings. We also show empirically that these additional non-Siamese layers can improve performance in tasks like graph anomaly detection, weight space alignment, and learning Wasserstein distances.
Our code is available at \href{https://github.com/yonatansverdlov/Irreducible-Representations-of-Deep-Weight-Spaces}{GitHub}.
\end{abstract}
\section{Introduction}

Learning with symmetries has recently attracted great attention in machine learning. In this learning setting, a group acts on an input space, and the hypothesis mappings are restricted to be equivariant with respect to the group action.
Motivated by the structure of fully connected neural networks, group equivariant models are often defined by a composition of parametric linear equivariant functions and non-parametric non-linear activations functions \citep{pmlr-v48-cohenc16}. Thus, characterizing all equivariant layers for a given group action is fundamental for understanding and designing equivariant deep neural networks. Indeed, this question has attracted a considerable amount of attention in various scenarios (e.g., \cite{finzi,pmlr-v80-kondor18a,NEURIPS2019_b9cfe8b6,pearce2023brauer,pearce2023jellyfish}).

For permutation groups, the standard strategy for characterizing equivariant layers is through studying parameter-sharing \citep{param2017}. Perhaps the first result in this direction was given by \cite{DeepSets}, who showed that only two parameters are required to characterize all permutation equivariant layers on a set of scalars. Another famous example is \cite{IGN}, who described the specific parameter-sharing scheme for graph equivariant networks and other tensor representations of the symmetric group. Recently, parameter-sharing has been applied in \cite{DWS,zhou2024permutation} to characterize all equivariant layers of weight space neural functionals. These neural networks operate on weights of another neural network, a problem with a complex multi-permutation equivariant structure.

In this paper, we revisit equivariant linear layer characterization for permutation groups from the perspective of irreducible representations.  Given a group $\G$ acting on a linear space $\V$, we explore linear equivariant mappings $f: \V \rightarrow \V$ by decomposing $\V$ to its minimal invariant components, which are irreducible representations of $\G$. Once this decomposition is performed, a basic result in representation theory called Schur's lemma automatically provides us with a full characterization of all equivariant mappings. 

As a first step, we use our method to get an alternative derivation for known equivariant layer characterization at the basis of the DeepSets \citep{DeepSets}, 2-IGN \citep{IGN}, and DWSNets \citep{DWS} architectures. Notably, for the Deep Weight Space problem, our derivation is significantly simpler than the cumbersome derivations previously obtained by parameter sharing \citep{DWS,zhou2024permutation}.   

Next, we consider the problem of sets of \emph{unaligned} symmetric elements, where equivariance to the joint action of a group $\G$ and the permutation group $S_k$ is required (also known as the wreath product $\G \wr S_k $).
 This setting arises naturally when considering alignment, distance prediction, or anomaly detection of sets of unaligned symmetric elements. It was studied by \cite{wreath} under the assumption that $\G$ acts transitively, in which case 
Siamese Networks capture an overwhelming majority of the equivariant layers. Indeed, in practice, Siamese networks are often employed \citep{DWS-align, Samanta}. In contrast, we give a full characterization of the equivariant layers in the general setting and show that in some settings (e.g., alignment of weights spaces), there can be a large number of non-Siamese equivariant layers. Empirically, we show that these additional layers improve performance on a synthetic graph anomaly detection task and the deep weight space alignment task discussed in \cite{DWS-align}. In summary, our contributions are the following:
\begin{itemize}[wide, labelwidth=!, labelindent=5pt]
    \item We give an alternative irreducible-based derivation for the  DeepSets \citep{DeepSets}, 2-IGN \citep{IGN}, and DWSNets \citep{DWS} architectures. The DWSnets derivation is significantly simpler than previous approaches.
    \item We provide a characterization of all equivariant functions on  sets of unaligned symmetric elements.    
    \item We empirically validate the importance of our equivariant layers for sets of unaligned symmetric elements by showing its superiority on a synthetic graph anomaly detection task, Wasserstein set distance computation, and deep weight space alignment task.
\end{itemize}

\section{Related work}
\paragraph{Learning Sets of Symmetric Elements.}
When dealing with multiple occurrences of symmetric objects as a set, additional symmetries arise as the set elements can be permuted. In this case, the layers are required to be equivariant under the permutation of elements in addition to the original element group action.
\cite{maron2020learning} suggested a new layer called Deep Sets for Symmetric Elements (DSS) that operates on a set of symmetric objects. They showed that incorporating DSS layers is strictly more expressive than using Siamese networks.  
The DSS framework was also used in sub-graph aggregation networks \citep{bevilacqua2021equivariant}, where the network should be invariant to the order of the sub-graphs.
In the main setting of \cite{maron2020learning}, the same group element $g \in G$ acts uniformly on all set objects. However, the more challenging setting is where we allow different elements $g_i \in G$ to act on the objects. This corresponds to the action of the restricted wreath product $G \wr S_k$ on the set. \cite{maron2020learning} deal with this setting as well, but only when $G$ is a subgroup of $S_n$ and the action of $G$ is transitive. Later,
\cite{wreath} characterized all linear mappings for general wreath products, however they also assumed that the group actions are transitive.

\paragraph{Equivariant Characterization Via Irreducibles.}
Using irreducible representations to characterize equivariant layers is a popular approach for rotation-equivariant networks \citep{tensorFields,cormorant,dymuniversality}. For permutation groups, the primary work which considered this approach, to the best of our knowledge, is \cite{kondor}. They suggest using the known characterization of permutation irreducible representations with Young diagrams \citep{fulton2013representation}  to characterize permutation-equivariant layers and explain how to reconstruct the results from DeepSets and 2-IGN. Our approach for these results is similar but gives a more explicit derivation of the DeepSet and 2-IGN layers. More importantly, our analysis covers deep weight spaces and sets of unaligned symmetric elements, not discussed in this paper. 

\section{preliminaries} 
Let $\V$ be a finite-dimensional vector space over the field $\mathbb{F}=\CC$ or  $\RR $. Let $\G$ be a finite group acting linearly on $\V$. We will sometimes say that $\V$ is a representation of $\G$. We will say $\V$ is a trivial representation of $\G$ if the action of $\G$ on $\V$ is the trivial action $gv=v$ for all $g\in \G$ and $v\in \V$.

A subspace $\U \subseteq \V$ is invariant to $\G$ if $g\cdot u \in \U$ for all $g\in G$ and $u\in \U$. We say an invariant subspace $\U$ is irreducible if it does not strictly contain an invariant subspace except the zero space. By Maschke's theorem, each finite-dimensional space $\V$ can be decomposed into a direct sum of irreducible invariant spaces \citep{fulton2013representation}.

A mapping $T: \V \rightarrow \V$ is equivariant if $T(g\cdot v) = g \cdot T(v)$ for every $v \in V, g \in \G$. Two representations $\V,\U$ of $\G$ are isomorphic if there is an equivariant linear bijection $L_{\V,\U}:\V \rightarrow \U$, which we notate by $\V \cong \U$.

The following lemma, named Schur's lemma, is a key component of our work.
Schur's lemma describes equivariant mappings between irreducible representations:
\paragraph{Schur's lemma.}
Let $\V, \W$ be finite-dimensional irreducible representations of $\G$ over  $\mathbb{F}$, and let $T: \V \to \W$ be a $\G$-equivariant linear map \footnote{The equivariant map is often referenced as a $G$-module.}. Then:
\begin{itemize}
    \item Either $T$ is an isomorphism or $T=0$.
    \item If $L:\V \to \W$ is an isomorphism, and    $\mathbb{F}=\CC$, then $T=\lambda L$ for $\lambda\in \CC$.
    
\end{itemize}

In this paper, we will only consider representations over $\RR$, which are the common setting in applications. In all cases, we will discuss, our real irreducible representations are \emph{absolutely irreducible}. This means that their natural extension to complex vector spaces is irreducible. In this case, if $L:\V \to \W$ is an isomorphism,    $\mathbb{F}=\RR$,  and $\V$ is \emph{absolutely irreducible}, then $T=\lambda L$ for $\lambda \in \RR$ \citep{note}. In Appendix \ref{app:absolutely} we elaborate more on this topic. We note that when $V$ is not absolutely irreducible, the space of isomorphisms from $\V$ to $\W$ is either 2 or 4 dimensional \citep{poonen2016real}. In this setting, using an automatic computational method to find all 2-4 equivariant layers may be beneficial \citep{finzi}.

\paragraph{Layer characterization using Schur's Lemma}
Equivariant layers from a representation $\V$ to itself can be characterized using Schur's lemma via two steps. The first step is decomposition, where we identify the decomposition of $\V$ into irreducible representations $\V=\oplus_{i=1}^s \V_i$. Any $x\in \V$ can be written uniquely as a sum $x=\sum_{i=1}^s x_i$ where each $x_i$ is in $\V_i$. The decomposition step also requires an algorithm to compute this decomposition. Next, we will need to identify which of the space pairs $\V_i$ and $\V_j$ are isomorphic, and if they are, specify an isomorphism $L_{ij} $. Finally, since we are working over the reals, we will need to verify that all irreducible representations are absolutely irreducible. 

Once the decomposition step is carried out, the second step uses Schur's lemma. Equivariant mappings $T:\V \to \V $   can be written as $T=\sum_{i,j=1}^s T_{ij} $, where $T_{ij}:\V_i \to \V_j$ are equivariant. By Schur's lemma if $\V_i$ and $\V_j$ are isomorphic then $T_{ij}=\lambda_{ij}L_{ij} $ for some $\lambda_{ij}\in \RR$, and otherwise $T_{ij}=0$. All in all, we obtain that $T$ is equivariant if and only if for some choice of the scalar $\lambda_{ij}$ we have
\begin{equation}\label{eq:Tgeneral}
T(x_1+\ldots+x_s)=\sum_{(i,j), \V_i \cong \V_j} \lambda_{ij}L_{ij}(x_i) .\end{equation}

The number of parameters in the expression above depends on the isomorphism relations between the irreducibles. We can assume without loss of generality that the first $1\leq t\leq s$ irreducible spaces are not isomorphic, and by identifying isomorphic irreducibles, we can rewrite the decomposition above as $\V=\oplus_{j=1}^t \V_j^{\oplus \alpha_j} $ (Where $\V_j^{\oplus \alpha_j}$ is defined to be the direct sum of $\alpha_j$ copies of $\V_{j}$)
where  $\alpha_1+\ldots+\alpha_t=s$.
The number of parameters $\lambda_{ij} $ in the decomposition is then $\sum_{i=1}^t \alpha_i^2 $. 
This process can easily be generalized to compute mappings between representations $\V$ and $\W$ by decomposing both spaces into irreducibles.

We note that the cornerstones of this methodology:  decomposition into irreducibles and Schur's Lemma, are applicable for all finite dimensional representations of finite groups (and also for compact infinite groups like $SO(d)$). The main challenge in this approach is characterizing and computing the decomposition into irreducibles. This needs to be done on a case to case basis. Much of the remainder of the paper will be devoted to computing these decompositions for important equivariant learning scenarios.

\section{Computing linear equivariant layers}
We now demonstrate how to derive the results in \cite{DeepSets,IGN, DWS} via decomposition into irreducibles.

\subsection{Deep Sets}
\label{sub:deepsets}
We begin with the simple case of the action of the permutation group $S_n$ on  $\V = \mathbb{R}^{n}$ by permutation of elements (we assume $n\geq 2$). Here, there are two non-trivial invariant spaces: 
\begin{equation}\label{eq:deepsets}
    \Scalar = \{\alpha \cdot 1_n| \alpha \in \mathbb{R}\} , \quad \Vector(n) =\{x\in \mathbb{R}^{n}:\sum^{n}_{i=1}x_{i} = 0\}.
\end{equation}
    As the first space is one-dimensional, it's irreducible. By \cite{Hinton06}, the second one is also irreducible. These two spaces are not isomorphic since the action of $S_n$ on $\Scalar$ is trivial, but the action of $S_n$ on $\Vector(n)$ is not (also they do not have the same dimension when $n\geq 3$). 
    
The decomposition of $x\in \RR^n$ to a sum of elements from the irreducible spaces can be computed as 
$$x=\bar x 1_n+(x- \bar x 1_n) $$
where $\bar x$ is the average of $x$, and $1_n$ is the all one vector. By \cref{eq:Tgeneral},  Schur's lemma, and the fact that all irreducibles of the permutation group are absolutely irreducible (see Appendix \ref{app:absolutely}), the linear equivariant mappings $T:\V \to \V$ are characterized by two parameters $a,b\in \RR$, that is: 
\begin{align*}
Tx=a\bar x 1_n+b(x-\bar x 1_n).
\end{align*}
This is exactly the result described in DeepSets \citep{DeepSets}.

\subsection{Equivariant graph layers}\label{sub:graph}
We next consider the setting where $\V = \mathbb{R}^{n \times n},\G = S_{n}$ and the group action is defined by $(\tau \cdot X)_{ij} = X_{\tau^{-1}(i),\tau^{-1}(j)}$. This setting is natural for graph neural networks, as two adjacency matrices $A, B\in \RR^{n\times n}$ represent isomorphic graphs if and only if  $A=\tau \cdot B $ for some permutation $\tau$. This setting was considered in \cite{IGN} using a parameter sharing scheme, and we show how to obtain similar results using irreducibles (note that \cite{IGN} also discussed general mappings between $k$-order and $\ell$-order tensors. In contrast, we only discuss the $k=\ell=2$ case). 

We claim that, when $n\geq 4$, the space $\RR^{n\times n}$ can be written as a sum of seven irreducible permutation invariant sub-spaces. The behavior for $n<4$ is discussed in Appendix \ref{app:small}.

The first two spaces are one-dimensional representations on which the action of $S_n$ is trivial: diagonal matrices with identical diagonal entries and matrices with zero diagonal and identical off-diagonal entries:
\begin{equation} \label{eq:graphTrivial}
    \V_{0} = \{ a \cdot I_n|a\in \R \}, \quad     \V_{1} = \{a \cdot (1_{n\times n} -   I_n)|a\in \R \}.
\end{equation}

Next, we have three spaces of dimension $n-1$ which are isomorphic to $\Vector(n)$ from \eqref{eq:deepsets}. The first space is the space $\V_{2}$ of diagonal matrices whose diagonal sums to zero
and the next two $n-1$ dimensional spaces are the space of matrices whose rows (respectively columns) are constant, and columns (respectively rows) sum to zero:
$$\V_3=\{r1_n^T|  \sum_{i=1}^n r_i=0 \}, \quad \V_4=\{1_nc^T|  \sum_{i=1}^n c_i=0 \}  $$
Finally, there are two larger irreducible spaces:
\begin{equation*}
	\V_5=\{A| \quad A=-A^T, A1_n=0_n\}, \quad 
	\V_6=\{A| \quad A=A^T, A1_n=0_n,  A_{ii}=0, \forall i=1,\ldots,n\}
\end{equation*}

The dimension of $\V_{5}$ is $\binom{n-1}{2}=\frac{n^{2}-3n}{2}+1$, and the dimension of $\V_{6}$ is $\binom{n-1}{2}-1=\frac{n^{2}-3n}{2}$.
So, we have the following decomposition:
\begin{align*}
    \V = \V_0\oplus\V_{1} \oplus \V_{2} \oplus \V_{3} \oplus \V_{4} \oplus \V_{5} \oplus \V_{6}  \cong \V_0^{\oplus 2}\oplus \V_3^{\oplus 3} \oplus \V_{5} \oplus \V_{6}
\end{align*}
Accordingly, using Schur's lemma, each linear mapping can be characterized by $3^{2}+2^{2}+1+1 = 15$ parameters, the same result from \cite{IGN}, so we have the expected number of parameters. In appendix \ref{decomposition}, we give a formal proof of these arguments and present an algorithm to decompose an input matrix $\X$ into its seven irreducible spaces with linear complexity in the matrix size $n^2$. These results are summarized in the following theorem.
\begin{restatable}{theorem}{graphs}\label{thm:graphs}
For all $n\geq 4$, the space $\RR^{n\times n} $ can be written as a direct sum of the spaces $\V_0,\ldots,\V_6$. These spaces are invariant and irreducible, and the isomorphism relations between them are given by 
$\V_0 \cong \V_1, \V_2\cong \V_3\cong \V_4. $
\end{restatable}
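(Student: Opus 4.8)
The plan is to establish the three assertions in turn --- the direct-sum decomposition, the invariance and irreducibility of each $\V_i$, and the precise list of isomorphisms --- using the representation theory of $S_n$ only through two inputs: that the standard representation $\Vector(n)$ is irreducible, and that $\dim_{\RR}\operatorname{End}_{S_n}(\RR^{n\times n})=15$ for $n\geq 4$. Throughout, $\tau\in S_n$ acts by $X\mapsto P_\tau X P_\tau^{T}$ with $P_\tau$ the permutation matrix, so $P_\tau 1_n=1_n$ and $P_\tau 1_{n\times n}P_\tau^{T}=1_{n\times n}$. Invariance of each $\V_i$ is then a short direct check: conjugation by $P_\tau$ preserves being symmetric or antisymmetric, preserves the diagonal together with its sum, preserves the row-sum-zero condition since $X1_n=0$ implies $(P_\tau X P_\tau^{T})1_n=P_\tau X 1_n=0$, preserves being constant along rows (or along columns), and preserves the zero-sum conditions on the defining vectors.

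For the decomposition I would exhibit the componentwise splitting of an arbitrary $X$ explicitly. Writing the ansatz $X=aI_n+b(1_{n\times n}-I_n)+\Diag(u)+r1_n^{T}+1_n c^{T}+S+T$ with $\sum_i u_i=\sum_i r_i=\sum_i c_i=0$, $S=-S^{T}$, $S1_n=0$, $T=T^{T}$, $T1_n=0$ and $T_{ii}=0$, summing the diagonal entries forces $a$ to be the mean of the diagonal of $X$ and summing the off-diagonal entries forces $b$ to be the mean of the off-diagonal entries; setting $W=X-aI_n-b(1_{n\times n}-I_n)$, the diagonal of $W$ together with its row-sums and column-sums yields, for each index, a $3\times3$ linear system in $(u_i,r_i,c_i)$ whose determinant is $n(n-2)\neq 0$ for $n\geq 3$; finally $S$ and $T$ are the antisymmetric and symmetric parts of $W-\Diag(u)-r1_n^{T}-1_nc^{T}$, and the constraints $S1_n=T1_n=0$, $T_{ii}=0$, as well as $\sum u_i=\sum r_i=\sum c_i=0$, follow from the row/column-sum relations. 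This shows the $\V_i$ span $\RR^{n\times n}$, and since $\sum_{i=0}^{6}\dim\V_i=2+3(n-1)+2\binom{n-1}{2}=n^2$, the sum is direct. (This computation is the linear-time algorithm of Appendix~\ref{decomposition}.)

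For irreducibility, $\V_0$ and $\V_1$ are one-dimensional with trivial action, hence irreducible and isomorphic; and $\Diag(u)\mapsto u$, $r1_n^{T}\mapsto r$, $1_nc^{T}\mapsto c$ are $S_n$-equivariant linear isomorphisms of $\V_2,\V_3,\V_4$ onto $\Vector(n)$, which is irreducible, so $\V_2\cong\V_3\cong\V_4$. The crux is $\V_5$ and $\V_6$, which I would treat globally. From the decomposition, in the isotypic decomposition of $\RR^{n\times n}$ the trivial and standard representations appear with multiplicity at least $2$ and $3$, so $\sum_\rho m_\rho^{2}\geq 4+9=13$; but $\sum_\rho m_\rho^{2}=\dim_{\RR}\operatorname{End}_{S_n}(\RR^{n\times n})$ (all $S_n$-irreducibles being absolutely irreducible) equals the number of $S_n$-orbits on $[n]^4$, which for $n\geq 4$ is the Bell number $B_4=15$, since every equality pattern of a $4$-tuple --- that is, every set partition of $\{1,2,3,4\}$ --- is realized once $n\geq 4$ (this also matches the $15$ parameters of \cite{IGN}). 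Hence those multiplicities are exactly $2$ and $3$ (any larger value already exceeds $15$), and the remaining irreducible constituents contribute exactly $15-13=2$ to $\sum_\rho m_\rho^{2}$, which is possible only with two further irreducibles $\rho_1\not\cong\rho_2$, each of multiplicity one and each distinct from the trivial and standard representations. Comparing multiplicities gives $\V_5\oplus\V_6\cong\rho_1\oplus\rho_2$; since a direct sum of two inequivalent irreducibles has no invariant subspaces besides $0$, the two summands, and the whole, and since $\V_5$ and $\V_6$ are nonzero --- here $n\geq 4$ is used for $\V_6\neq 0$ --- and proper, each of $\V_5,\V_6$ equals one of $\rho_1,\rho_2$. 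Thus $\V_5$ and $\V_6$ are irreducible, are not isomorphic to one another, and are not isomorphic to $\V_0$ or to $\V_2$, so the full list of isomorphisms among $\V_0,\dots,\V_6$ is $\V_0\cong\V_1$ and $\V_2\cong\V_3\cong\V_4$.

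The main obstacle will be the irreducibility of $\V_5$ and $\V_6$: they are neither one-dimensional nor obviously copies of $\Vector(n)$, and a hands-on ``generate a nonzero matrix under $S_n$'' argument is cumbersome. The argument above avoids this, at the cost of needing the orbit-count identity $\dim\operatorname{End}_{S_n}(\RR^{n\times n})=15$ (equivalently, the known parameter count of 2-IGN) and the elementary fact that a direct sum of two inequivalent irreducibles has exactly four invariant subspaces. The remaining subtlety is purely bookkeeping: the $3\times3$ system needs $n\neq 2$, while both the Bell-number count and $\V_6\neq 0$ require $n\geq 4$, which is why the theorem is stated for $n\geq 4$.
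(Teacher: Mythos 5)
Your proof is correct, but it takes a genuinely different route from the paper's for the hardest part, the irreducibility of $\V_5$ and $\V_6$. The paper proves this by hand: taking nonzero invariant subspaces $\tilde\V_5\subseteq\V_5$, $\tilde\V_6\subseteq\V_6$, averaging representatives over the stabilizer of $(1,2)$, and combining the resulting matrices with elements of $\V_0\oplus\cdots\oplus\V_4$ to produce $E^{1,2}$, hence all of $\RR^{n\times n}$. You instead import the fact that $\dim\operatorname{End}_{S_n}(\RR^{n\times n})$ equals the number of $S_n$-orbits on $[n]^4$, which is $B_4=15$ for $n\geq 4$, and use $\sum_\rho m_\rho^2=15$ together with the lower bounds $m_{\mathrm{triv}}\geq 2$, $m_{\mathrm{std}}\geq 3$ to force the remaining multiplicities, concluding via the four-invariant-subspaces lemma. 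This is cleaner and requires no explicit matrix manipulation, but it buys this simplicity by importing the parameter-sharing count (the IGN ``15'') that the paper's irreducibles-based derivation is meant to \emph{produce} as output; the paper therefore has to prove irreducibility from scratch. Within the scope of the theorem as stated, your argument is self-contained and valid --- the orbit count does not presuppose the decomposition --- so this is a legitimate alternative, not circular, but readers of the paper should be aware it reverses the direction of the paper's overall derivation.

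One small arithmetic slip: you write $\sum_{i=0}^6\dim\V_i=2+3(n-1)+2\binom{n-1}{2}$, but $\dim\V_6=\binom{n-1}{2}-1$, so the correct total is $2+3(n-1)+2\binom{n-1}{2}-1=n^2$; as written your formula evaluates to $n^2+1$. This does not damage the argument, since directness of the sum already follows from the \emph{uniqueness} your explicit solve establishes (the $3\times3$ system with determinant $n(n-2)$ pins down $(u_i,r_i,c_i)$ and the symmetric/antisymmetric split pins down $S,T$), but the displayed dimension count should be corrected.
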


\subsection{Deep weight spaces}
Recently, there has been growing interest in devising neural operators: neural networks that operate on an input, which is itself a neural network. This type of problem is of interest for various tasks involving post-processing and synthesis of multiple trained neural networks, as well as for processing Implicit Neural Representations (INRs), which are a popular alternative for representing certain standard data structures.  (see e.g. \cite{scaleEqui} for further discussion). 

A key concept in the design of 'neural operators' has been the requirement that they are equivariant to the input neural data \citep{scaleEqui,kofinas2024graph,zhou2024universalneuralfunctionals,zhou2023neural,lim2024graph} 
In this section we consider the setting discussed in \cite{DWS,zhou2024permutation}, where  the neural data  is a collection of weights and biases $(W_m,b_m)_{m=1}^M$ representing an MLP of depth $M$, and a neural operator is constructed by composing standard activation with linear mappings  which are equivariant with respect to the multi-permutation action we will now describe:

The output of an MLP architecture is invariant to the permutation of its hidden neurons. For example, the MLP defined by $W_2\cdot ReLU \cdot (W_1x) $ will remain the same function if we replace weights $W_2,W_1$ with the weights $W_2P,P^TW_1 $. 
 
To define the symmetries of learning on MLPs in full generality, we will adapt the notation from \cite{DWS}. We consider the space of MLP parameters with a given fixed depth $M$ and layer dimensions $d_0,\ldots,d_{M+1}$ ($M$ and all $d_j$ are assumed to be larger than one). These are parameterized by the vector space
$\V=\bigoplus_{m=1}^M \left( \mathcal{W}_m\oplus \mathcal{B}_m \right)$
where $\mathcal{W}_m:=\mathbb{R}^{d_{m}\times d_{m-1}}$ and $\mathcal{B}_m := \mathbb{R}^{d_m}$ represent the weights and biases of the $m$-th layer. 
The symmetry group of the weight space is the direct product of symmetric groups 
$
    \G 
    = S_{d_1}\times \cdots\times S_{d_{M-1}}.
$
An element $g=(\tau_1,\ldots,\tau_{M-1})$ in the group acts on an element $v=[W_m,b_m]_{m\in[M]}$ from  $\mathcal{V}$ as follows: 
\begin{subequations}\label{eq:action}
\begin{equation}
    \rho(g)v 
    = [W'_m,b'_m]_{m\in[M]}, 
\end{equation}
\begin{equation}
            W_1'=  P_{\tau_1}^T W_1, 
            ~ b_1' = P_{\tau_1}^Tb_1, 
\end{equation}
\begin{equation}
            W_m' =  P_{\tau_m}^T W_m P_{\tau_{m-1}}, 
            ~ b_m' = P_{\tau_m}^T b_m, ~m\in [2,M-1]
\end{equation}
\begin{equation}
            W_{M}' =   W_M P_{\tau_{M-1}}, 
            ~ b_{M'} = b_M. 
\end{equation}
\end{subequations}

where $P_{\tau_m}\in \mathbb{R}^{d_m \times d_m}$ is the permutation matrix of $\tau_m\in S_{d_m}$.

Previous work \citep{DWS,zhou2024permutation} has already characterized all linear equivariant functions from $\V$ to itself. However, this characterization requires tedious bookkeeping and division into a large number of different cases. Here we show how to decompose $\V$ into irreducibles in a rather straightforward way, and as a result obtain a (arguably) simpler  characterization of all linear equivariant layers.

We claim that $\V$ is a direct sum of multiple copies of $2M-3$ irreducible representations of $\G$:
\begin{enumerate}[wide, labelwidth=!, labelindent=5pt]
    \item The first irreducible representation is the trivial scalar representation $\Scalar=\RR$ with the trivial action $gx=x$. 
    \item The next $M-1$ representations are the vector representations $\Vector_m, m=1,\ldots,M-1$ of vectors in $\RR^{d_m}$ which sum to zero (the spaces $\Vector(d_m)$ from \eqref{eq:deepsets}), with the action of $g$ being permutation of the vector by the $m$-th permutation $\tau_m$.
    
    \item The last $M-2$ representations are representations $\Matrix_m, \quad m=2,\ldots,M-1 $ of $d_m\times d_{m-1} $ matrices whose rows and columns sum to zero, where the action of $g$ on a matrix $W$ in this space is given by $P_{\tau_m}^TWP_{\tau_{m-1}}$. 
\end{enumerate}

The linear equivariant layers from $\V$ to itself can be inferred from the following theorem:
\begin{theorem}\label{thm:dws}
The spaces $\Scalar, \Vector_1,\ldots,\Vector_{M-1},\Matrix_2,\ldots,\Matrix_{M-1}$ are absolutely irreducible, and are not isomorphic to each other. 
$\V$ is isomorphic to $\Scalar^{\oplus \alpha}\oplus\left(\oplus_{m=1}^{M-1}\Vector_m^{\oplus \beta_m}\right)\oplus \left(\oplus_{m=2}^{M-1} \Matrix_m\right) $, where 
\begin{equation}\label{eq:sizeDWS}
\alpha=d_0+2d_M+2M-3, \quad \beta_1=d_0+2, \quad \beta_{M-1 }=d_M+2, \quad   \beta_m=3, \forall m=2,\ldots,M-2 
\end{equation}
In particular, the space of equivariant linear mappings from $\V$ to itself is thus of dimension $\alpha^2+\sum_{m=1}^{M-1}\beta_m^2+(M-2) $.
\end{theorem}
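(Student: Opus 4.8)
The plan is to decompose $\V$ into irreducibles block by block, reducing everything to the decomposition $\RR^{d}\cong\Scalar\oplus\Vector(d)$ of the standard permutation representation from \cref{sub:deepsets} together with the behaviour of external tensor products under a direct product of groups. First I would handle the bias blocks: for $m<M$ the block $\mathcal{B}_m=\RR^{d_m}$ carries the permutation action of $S_{d_m}$ alone (all other factors of $\G$ acting trivially), so $\mathcal{B}_m\cong\Scalar\oplus\Vector_m$, while $\mathcal{B}_M=\RR^{d_M}$ carries the trivial action, hence $\mathcal{B}_M\cong\Scalar^{\oplus d_M}$. Next the weight blocks: $\mathcal{W}_1=\RR^{d_1\times d_0}$, acting by $W\mapsto P_{\tau_1}^TW$, is $d_0$ independent copies of the permutation representation of $S_{d_1}$, so $\mathcal{W}_1\cong\Scalar^{\oplus d_0}\oplus\Vector_1^{\oplus d_0}$, and symmetrically $\mathcal{W}_M\cong\Scalar^{\oplus d_M}\oplus\Vector_{M-1}^{\oplus d_M}$. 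For $2\le m\le M-1$, under the identification $\RR^{d_m\times d_{m-1}}\cong\RR^{d_m}\otimes\RR^{d_{m-1}}$ the two-sided action $W\mapsto P_{\tau_m}^TWP_{\tau_{m-1}}$ becomes (up to isomorphism) the external tensor product $\RR^{d_m}\boxtimes\RR^{d_{m-1}}$ of permutation representations; distributing over $\RR^{d_m}\cong\Scalar\oplus\Vector_m$ and $\RR^{d_{m-1}}\cong\Scalar\oplus\Vector_{m-1}$ and using $\Scalar\boxtimes\Scalar\cong\Scalar$, $\Vector_m\boxtimes\Scalar\cong\Vector_m$, $\Scalar\boxtimes\Vector_{m-1}\cong\Vector_{m-1}$, and $\Vector_m\boxtimes\Vector_{m-1}\cong\Matrix_m$ gives $\mathcal{W}_m\cong\Scalar\oplus\Vector_{m-1}\oplus\Vector_m\oplus\Matrix_m$. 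The last identification holds because a simple tensor $v\otimes w$ with $\sum_i v_i=\sum_j w_j=0$ is exactly a rank-one matrix with vanishing row and column sums, and such tensors span the $(d_m-1)(d_{m-1}-1)$-dimensional space $\Matrix_m$.

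Next I would establish irreducibility. The space $\Scalar$ is one-dimensional, hence absolutely irreducible, and each $\Vector_m$ is absolutely irreducible by \cref{sub:deepsets} and Appendix~\ref{app:absolutely}. For $\Matrix_m$ I would invoke the standard fact that the external tensor product of absolutely irreducible representations of the factors of a direct product group is again absolutely irreducible — concretely, the complexification of $\Matrix_m$ is the irreducible external tensor product of the standard $S_{d_m}$- and $S_{d_{m-1}}$-representations \citep{fulton2013representation} — and since $\G$ acts on $\Matrix_m$ only through $S_{d_m}\times S_{d_{m-1}}$, $\Matrix_m$ is absolutely irreducible as a $\G$-representation. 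For pairwise non-isomorphism I would attach to each of these irreducibles its support, the set of indices $j\in\{1,\dots,M-1\}$ on which $S_{d_j}$ acts nontrivially: $\mathrm{supp}(\Scalar)=\emptyset$, $\mathrm{supp}(\Vector_m)=\{m\}$, $\mathrm{supp}(\Matrix_m)=\{m-1,m\}$ (here the hypothesis $d_j\ge 2$ guarantees that $\Vector_j$ is a nontrivial $S_{d_j}$-representation). Restricting any $\G$-isomorphism to each factor $S_{d_j}$ shows that isomorphic representations must have equal support; since the supports occurring in our list are pairwise distinct, the listed representations are pairwise non-isomorphic.

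Finally I would assemble the pieces: collecting the blockwise decompositions gives $\V\cong\Scalar^{\oplus\alpha}\oplus\bigl(\bigoplus_{m=1}^{M-1}\Vector_m^{\oplus\beta_m}\bigr)\oplus\bigl(\bigoplus_{m=2}^{M-1}\Matrix_m\bigr)$, and summing the multiplicities contributed by the blocks yields \cref{eq:sizeDWS}. Indeed $\Scalar$ receives $1$ from each $\mathcal{B}_m$ with $m<M$, $d_M$ from $\mathcal{B}_M$, $d_0$ from $\mathcal{W}_1$, $1$ from each middle $\mathcal{W}_m$, and $d_M$ from $\mathcal{W}_M$, for a total of $d_0+2d_M+2M-3$; $\Vector_1$ receives $1$ from $\mathcal{B}_1$, $d_0$ from $\mathcal{W}_1$, and $1$ from $\mathcal{W}_2$, total $d_0+2$; symmetrically $\Vector_{M-1}$ has multiplicity $d_M+2$; each intermediate $\Vector_m$ receives $1$ from $\mathcal{B}_m$, $1$ from $\mathcal{W}_m$, and $1$ from $\mathcal{W}_{m+1}$, total $3$; and each $\Matrix_m$ has multiplicity $1$. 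The dimension of the space of equivariant endomorphisms then follows from the Schur's lemma recipe recorded in \cref{eq:Tgeneral} and the surrounding discussion, namely the sum of squares of the multiplicities, $\alpha^2+\sum_{m=1}^{M-1}\beta_m^2+(M-2)$.

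The hard part will not be any single computation but the organization: getting the boundary blocks $\mathcal{W}_1,\mathcal{W}_M$ and all of the bias blocks counted correctly — precisely the bookkeeping that makes the parameter-sharing derivations cumbersome — together with the clean identification $\Vector_m\boxtimes\Vector_{m-1}\cong\Matrix_m$ and the verification that this external tensor product is genuinely absolutely irreducible as a representation of $\G$.
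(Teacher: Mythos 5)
Your proposal is correct and follows essentially the same route as the paper: decompose the bias and weight blocks separately using the DeepSets decomposition $\RR^{d}\cong\Scalar\oplus\Vector(d)$, treat the middle weight blocks as external tensor products $\RR^{d_m}\boxtimes\RR^{d_{m-1}}$ (as the paper does in Appendix~\ref{app:dws}), and tally multiplicities. Your ``support'' argument for pairwise non-isomorphism is a slightly more formal rendering of the paper's one-line observation that different factors of the multi-permutation act nontrivially on each irreducible, but it is the same idea.
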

\begin{proof}
The spaces $\Scalar, \Vector_1,\ldots,\Vector_{M-1},\Matrix_2,\ldots,\Matrix_{M-1}$ are not isomorphic (even when they have the same dimension). The action of $\G$ on $\Scalar$ is trivial while the action on the other spaces is not. The other spaces aren't isomorphic to each other since different components of the multi-permutation $g\in \G$ acts on each space. 
The spaces $\Scalar$ and $\Vector_m$ are absolutely irreducible, as we discussed in Subsection \ref{sub:deepsets}. We will explain why $\Matrix_m$ are absolutely irreducible in Appendix \ref{app:dws}.

Obtaining the decomposition of $\V$ is rather straightforward. First, following \cite{DWS}, we identify the weight spaces $\W_m $ and bias spaces $\B_m$ with subspaces $\hat{\W}_m $ and $\hat{\B}_m$ of $\V$ by zero padding, and note that each one of these subspaces is $\G$-invariant and that their direct sum gives us the full parameter space $\V$. We can then decompose each one of these spaces into irreducibles to obtain
\begin{subequations}\label{eq:dws}
\begin{align}
\hat{\B}_m &\cong \Scalar \oplus \Vector_m, \quad m=1,\ldots,M-1\\
\hat{\B}_M &\cong \Scalar^{\oplus d_M}\\
\hat{\W}_1 &\cong \Scalar^{\oplus d_0} \oplus \Vector_1^{\oplus d_0}\\
\hat{\W}_m &\cong \Scalar \oplus \Vector_{m-1}\oplus \Vector_{m} \oplus \Matrix_m, \quad m=2,\ldots,M-1 \label{most_challenging}\\
\hat{\W}_M & \cong \Scalar^{\oplus d_M} \oplus \Vector_{M-1}^{\oplus d_M}
\end{align}
\end{subequations}

The multiplicities of each irreducible in $\V$, specified in \eqref{eq:sizeDWS}, can now be found by simply counting how many times each irreducible appears in the decomposition above. 

The decomposition in \eqref{eq:dws} can actually be easily obtained from the 'deepsets' decomposition of the $S_n$ on $\RR^n$ described previously: the action of $\G$ on $\hat \B_m$ with $m<M$ is isomorphic to the action of $S_{d_m}$ on $\RR^m$ and hence we get the exact 'DeepSets' decomposition from  \cref{sub:deepsets}. The action of $\G$ on $\hat{\B}_M$ is trivial and hence can be written as a direct sum of $d_M$ trivial one dimensional $\Scalar$ spaces. The action on $\hat{\W}_1$ (and $\hat{\W}_M$) multiplies a matrix by a permutation from the left (right), and hence can be seen as a direct sum of ‘deep-sets’ actions  on the columns (rows) of the matrix. Finally, the action of $\G$ on $\hat{\W}_m$ for $m=2,\ldots,M-1$ is a tensor product of the natural action of $S_{d_{m-1}}$ on $\RR^{d_{m-1}}$ and the action of $S_{d_{m}}$ on $\RR^{d_{m}}$, and therefore its irreducible decomposition can be obtained by taking tensor products of the irreducible representations of $\RR^{d_{m}}$ and $\RR^{d_{m-1}}$, as explained in more detail in the full proof.
\end{proof}
We note that \eqref{eq:dws} includes almost all information required to compute linear equivariant maps from $\V$ to itself. All is left is the decomposition algorithm to write each  $(W_m,b_m)_{m\in [M]} $ as a direct sum of elements in the irreducible decomposition. This can be done independently for each subspace $\W_m$ and $\B_m$. The decomposition for  $\hat{\W}_m$ in \eqref{most_challenging} is not immediate and we will explain it in Appendix \ref{app:Wdecompose}.

The decomposition in \eqref{eq:dws} and \eqref{eq:sizeDWS} also provides substantial additional information. For example, we can immediately see that there are $\alpha$  invariant maps from $\V$ to $\RR=\Scalar$ , which correspond to the number of copies of the trivial representation $\Scalar$ in $\V$. Moreover, if we are interested in the equivariant maps from a bias space $\hat{\B}_i$ (or weight space $\hat{\W}_i$)  to another bias space $\hat{\B}_j$ (or weight space $\hat{\W}_j$), we can easily infer the equivariant mappings from the decomposition in \eqref{eq:dws}. For example, when $i=j=M$ there will be $d_M^2 $ mappings since $\hat{\B}_m$ consists of $d_M$ isomorphic representations,  and when $i<j=M$ there will be $d_M$ mappings since $\Scalar$ appears a single time in $\hat{\B}_i$ and $d_M$ times in $\hat{\B}_M$. Continuing in this way we can reconstruct all the different bias-to-bias, bias-to-weight, weight-to-bias, and weight-to-weight cases analyzed in Tables 5-8 of \cite{DWS}, and Tables 8-11 in \cite{zhou2024permutation}. More importantly, these tables which were necessary for implementing weight space layers in previous work, are not necessary when implementing these layers using Schur's lemma as we suggest.

\section{Sets of unaligned symmetric elements}
\label{sec:wreath}
Next, we consider the setting where our data is a $k$-tuple of 'unaligned objects' $(v_1,\ldots,v_k)$, each coming from a representation $\V$ of $\G$, and we want to learn functions which are equivariant with respect to the joint action of a $k$-tuple of group elements $(g_1,\ldots,g_k)$ on each coordinate independently, and to a permutation $\tau\in S_k$ of the $k$-tuple.

We define $\G \wr S_k:= \G  \times \cdots_{k-times} \times \G \times S_{k}$, and the action is given by
\begin{align}
    (\tau,g_{1},..,g_{k})\cdot (v_{1},..,v_{k}) = (g_{\tau(1)}\cdot v_{\tau(1)},...,g_{\tau(k)}\cdot v_{\tau(k)}) \label{eq:wreath_action}
\end{align}
The group for which this action is defined is also called the restricted wreath product of $\G$ and $S_k$. For more details, see \cite{wreath}.

This type of 'wreath-equivariant-structure' arises in several settings. One is 'alignment problems', where our goal is, given a pair of elements $(v_1,v_2)\in \V^k, k=2$,  to find the group element $g^*=g^*(v_1,v_2)$ which makes $v_1$ 'as similar as possible' to $v_2$. This task is equivariant to application of $\G$ elements to each coordinate because if 
$g^*v_1\approx v_2$ then $g_2g^*g_1^{-1} (g_1v_1)\approx g_2v_2 $. Similarly, this task is equivariant to permuting $v_1$ and $v_2$, because if $g^*v_1 \approx v_2$ then $(g^*)^{-1}v_2\approx v_1 $. For a more detailed derivation, see \cite{Samanta,DWS-align}, which discussed these problems for sets and weight spaces, respectively. Additional examples of 'wreath-equivariant-problems' are the anomaly detection problem discussed in the experimental section, and problems with hierarchical structures as discussed in \cite{wreath}. 

\paragraph{Wreath-equivariant layers.} Our aim is to characterize all $\G \wr S_k$ equivariant mappings from $\V^k$ to itself. We note that any linear $\G$-equivariant mapping $\hat L:\V \to \V $ induces a 'Siamese'  $\G \wr S_k$ equivariant mapping defined by 
$$L(v_1,\ldots,v_k)=(\hat L(v_1),\ldots,\hat L(v_k)) .$$

The interesting question is how many additional mappings are present. 
This problem was previously studied in \cite{maron2020learning,wreath} when $\G$ is a finite group acting on $\RR^n$ \emph{transitively} by permutations (this means that the action of $\G$ on $[n]$ has a single orbit). In this setting, the equivariant mappings are composed of the Siamese mappings and a single additional non-Siamese mapping. However, the transitivity assumption does not hold in many examples of interest, such as the graph and weight space examples discussed in this paper. In our analysis, we will release the transitivity assumption, and allow $\G$ to be a general finite group. In some cases, this will lead to a substantial number of non-Siamese networks. 

To characterize $\G \wr S_k $ equivariant functions, we first aim to characterize all invariant irreducible sub-spaces of $\V^{k}$, assuming we know all irreducible sub-spaces of $\V$. An important role will be played by \emph{trivial representations}: representations $\Scalar$ of $\G$ such that $gv=v$ for all $g\in \G$ and $v\in \Scalar$. 

\begin{theorem}\label{thm:wreath_irreps}
Let $\V$ be a real representation of $\G$, with irreducible decomposition 
\begin{equation}\label{eq:trivialPlusNon}
\V=\left(\oplus_{i=1}^s \Scalar_i\right) \oplus \left( \oplus_{j=1}^t \V_t \right) \end{equation}
where $\Scalar_i$ are trivial representations and $\V_t$ are not. Then an irreducible decomposition for $\V^k$ with respect to the action of $\G \wr S_k$ is given by
$$\V^k=\left(\oplus_{i=1}^s \Scalar_{i,0}^k \right) \oplus \left(\oplus_{i=1}^s \Scalar_{i,1}^k \right)  \oplus \left( \oplus_{j=1}^t \V_t^k \right),$$
$$\text{where } \quad  \Scalar_{i,0}^k=\{(s_1,\ldots,s_k)\in \Scalar_i^k, \sum_{i=1}^k s_i=0 \}, \quad  \Scalar_{i,1}^k=\{(s,\ldots,s)\in \Scalar_i^k \}$$
\end{theorem}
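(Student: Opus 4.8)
I would split the claim into (i) showing the right-hand side is a direct-sum decomposition of $\V^k$ into $\G \wr S_k$-invariant subspaces, and (ii) showing each summand is irreducible. For (i): each of the $k$ copies of $\V$ inside $\V^k$ decomposes as in \eqref{eq:trivialPlusNon}, and collecting, for each fixed irreducible block of $\V$, the corresponding coordinate copies across the $k$ factors, we get $\V^k=\left(\oplus_{i=1}^{s}\Scalar_i^{k}\right)\oplus\left(\oplus_{j=1}^{t}\V_j^{k}\right)$, where $\Scalar_i^{k}$ (resp.\ $\V_j^{k}$) is the subspace of tuples whose every coordinate lies in the $i$-th trivial (resp.\ $j$-th non-trivial) summand of $\V$. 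These subspaces are $\G \wr S_k$-invariant, since the per-coordinate $\G$-actions preserve each block of $\V$ and the $S_k$-part only permutes coordinates. Finally, as each $\Scalar_i$ is one-dimensional with trivial $\G$-action, $\Scalar_i^{k}$ is a copy of $\RR^{k}$ with $\G^{k}$ acting trivially and $S_k$ permuting coordinates; its splitting $\Scalar_i^{k}=\Scalar_{i,0}^{k}\oplus\Scalar_{i,1}^{k}$ into sum-zero part and diagonal, and the irreducibility of both pieces, is exactly the DeepSets computation of \cref{sub:deepsets} applied to $S_k$ (assuming $k\geq 2$; for $k=1$ the spaces $\Scalar_{i,0}^{k}$ vanish and the statement is trivial).

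The real work is (ii) for the blocks $\V_j^{k}$. I would restrict to the subgroup $\G^{k}\leq \G\wr S_k$ of elements with trivial permutation part. Writing $\V_j^{(\ell)}\subseteq\V_j^{k}$ for the $\ell$-th coordinate copy of $\V_j$, we have $\V_j^{k}=\oplus_{\ell=1}^{k}\V_j^{(\ell)}$ as $\G^{k}$-representations, and each $\V_j^{(\ell)}$ is $\G^{k}$-irreducible because restricting the $\G^{k}$-action to its $\ell$-th factor is just the irreducible action of $\G$ on $\V_j$. The crucial observation is that these $k$ summands are pairwise \emph{non-isomorphic} as $\G^{k}$-representations: restricting $\V_j^{(\ell)}$ and $\V_j^{(\ell')}$ to the $\ell$-th factor of $\G^{k}$ yields $\V_j$ and a trivial $\G$-module, respectively, and $\V_j$ is non-trivial by hypothesis. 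Hence, by semisimplicity and Schur's lemma (uniqueness of the isotypic decomposition), every $\G^{k}$-invariant subspace of $\V_j^{k}$ equals $\oplus_{\ell\in T}\V_j^{(\ell)}$ for some $T\subseteq[k]$.

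Now take any nonzero $\G\wr S_k$-invariant $U\subseteq\V_j^{k}$. Being $\G^{k}$-invariant, $U=\oplus_{\ell\in T}\V_j^{(\ell)}$ with $T\neq\emptyset$; being invariant under the subgroup $S_k\leq\G\wr S_k$, which permutes the coordinate copies $\{\V_j^{(\ell)}\}_{\ell\in[k]}$ transitively, $T$ must be a nonempty $S_k$-invariant subset of $[k]$, so $T=[k]$ and $U=\V_j^{k}$. This gives irreducibility. Running the same argument after complexification (using that $\V_j$ and the $\Scalar_i$ are absolutely irreducible) shows moreover that $\V_j^{k}$, $\Scalar_{i,0}^{k}$ and $\Scalar_{i,1}^{k}$ are absolutely irreducible, which is what is needed to later apply Schur's lemma over $\RR$.

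The main obstacle is step (ii): one has to notice that passing to $\G^{k}$ turns the coordinate copies of a \emph{non-trivial} irreducible into pairwise non-isomorphic pieces (so that no "diagonal" invariant subspace can appear), whereas for a trivial block all $k$ coordinate copies are isomorphic and the $S_k$-action genuinely mixes them, producing the two-piece DeepSets split instead. Once this dichotomy is in hand, the remaining verifications are routine bookkeeping.
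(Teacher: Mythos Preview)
Your proof is correct, and its overall shape matches the paper's: split $\V^k$ into the blocks $\Scalar_i^k$ and $\V_j^k$, handle each $\Scalar_i^k$ by identifying it with the standard $S_k$-representation on $\RR^k$ and invoking the DeepSets decomposition, and prove separately that each $\V_j^k$ is $\G\wr S_k$-irreducible whenever $\V_j$ is a non-trivial $\G$-irreducible.

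Where you diverge from the paper is in the proof of that last irreducibility statement. The paper (Lemma in Appendix~\ref{app:wreath}) argues directly: given a nonzero $\G\wr S_k$-invariant $\W\subseteq\V_j^k$, pick a nonzero $(v_1,\ldots,v_k)\in\W$, permute so $v_1\neq 0$, choose $g\in\G$ with $gv_1\neq v_1$ (such $g$ exists since a non-trivial irreducible has no nonzero fixed vector), and subtract to obtain $(gv_1-v_1,0,\ldots,0)\in\W$; irreducibility of $\V_j$ then forces the whole first coordinate copy into $\W$, and $S_k$-invariance finishes. Your route is more structural: you observe that the coordinate copies $\V_j^{(\ell)}$ are pairwise \emph{non-isomorphic} as $\G^k$-modules (restriction to the $\ell$-th factor distinguishes them), so by uniqueness of the isotypic decomposition every $\G^k$-submodule is of the form $\oplus_{\ell\in T}\V_j^{(\ell)}$, and $S_k$-transitivity forces $T=[k]$. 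Both arguments are short; yours invokes slightly more representation-theoretic machinery but has the virtue of making the trivial/non-trivial dichotomy completely transparent (for a trivial block the coordinate copies \emph{are} $\G^k$-isomorphic, which is exactly why diagonal-type subspaces appear and the DeepSets split is needed). Your closing remark on absolute irreducibility is not part of the paper's proof of this theorem, but it is correct and is what is needed downstream when Schur's lemma over $\RR$ is applied.
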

\begin{proof}
The fact that $\V_i $ is $\G$ invariant implies easily that  $\V_i^k$ is $\G \wr S_k$ also invariant. In the appendix \ref{power_irred}, we will show that if the action of $\G$ on $\V$ is not trivial then $\V^k $ is irreducible representation of $\G \wr S_k$. 
In contrast, for the spaces $\Scalar_i^k$ the action of $\G$ is trivial, so this representation can be identified with the standard representation $\RR^k$ of $S_k$. The decomposition to $\Scalar_{i,0}^k$ and $\Scalar_{i,1}^k$ then follows from the 'DeepSets decomposition' discussed in \cref{sub:deepsets}. \end{proof}
To count the number of linear equivariant mappings $L:\V^k \to \V^k $, we note that 
$$\Scalar_{i,0}^k \cong \Scalar_{j,0}^k, \quad \Scalar_{i,1}^k \cong \Scalar_{j,1}^k, \quad \Scalar_{i,0}^k \not \cong \Scalar_{i,1}^k, \quad \forall 1\leq i<j\leq k $$
Thus, the number of linear equivariant mappings from the 'trivial part' of $\V^k$, that is $\left(\oplus_{i=1}^s \Scalar_{i,0}^k \right)\oplus \left(\oplus_{i=1}^s \Scalar_{i,1}^k \right)$, to istelf, is $2s^2$, while the number of linear equivariant mappings from the 'trivial part' of $\V$ to itself (which is equal to the number of Siamese layers), is $s^2$. As a result, we obtain $s^2$ non-Siamese $\G \wr S_k$ equivariant maps. 

\paragraph{Examples.}
We found that the number of non-Siamese layers is $s^2$, where $s$ is the number of trivial representations in $\V$. Let us consider the implications for the three examples we have discussed earlier:
\begin{enumerate}[wide, labelwidth=!, labelindent=1pt]
\item \textbf{Deep-sets.} In the DeepSets setting where $\V=\RR^n, \G=S_n$, there is a single trivial representation of constant vectors, and therefore in this setting there is a unique non-Siamese layer for $\V^k$. Indeed, this is the case for any group acting transitively by permutations on $\RR^n$, and thus we obtain the results for transitive actions from \cite{maron2020learning,wreath}. 
\item \textbf{Graphs.} In the graph setting where $\V=\RR^{n\times n}, \G=S_n$ there are $s=2$ trivial representations (see \eqref{eq:graphTrivial}), and therefore $s^2=4$ non-Siamese layers.
\item \textbf{Weight Spaces.} In the weight space examples the number $s$  of trivial representations is rather large,  $\alpha=d_0+2d_M+2M-3 $ from \eqref{eq:sizeDWS}, and there are $\alpha^2$ non-Siamese mappings. 
\end{enumerate}

In the next theorem we give an explicit characterization of the non-Siamese layers of $\V^k$. For this characterization, we note that the direct sum of all trivial representations of $\V$ (as in \eqref{eq:trivialPlusNon}) is the vector space 
$\V_{fixed}=\{v\in \V | gv=v, \forall g \in \G \}. $
Alternatively, every basis $e_1,\ldots,e_s$ of $\V_{fixed}$ defines a decomposition of $\V_{fixed}$ into one-dimensional irreducible invariant sub-spaces $\S_i=\{ce_i|  c \in \RR\} $. Accordingly, our characterization of non-Siamese layers is based on such bases:
\begin{restatable}{theorem}{wreath}
\label{characterization}
Let $\V$ be a real representation of a finite group $\G$. and
let $e_1,\ldots,e_s$ be a basis to the subspace 
$\V_{fixed}$. Let $\langle \cdot, \cdot \rangle$ be a $\G$ invariant inner product on $\V$.
Then every linear equivariant map $L:\V^{k} \to \V^{k} $ is of the form

\begin{equation}\label{eq:form}
L(v_1,\ldots,v_k)=\sum_{i,j=1}^s a_{ij}\left(\sum_{\ell=1}^k \langle v_\ell,e_i \rangle e_j, \ldots, \sum_{\ell=1}^k \langle v_\ell,e_i \rangle e_j\right) 
+ \left(\hat{L}(v_1),\ldots ,\hat{L}(v_k) \right)  \end{equation}
where $\hat L:\V \rightarrow \V$ is a linear equivariant map, and $a_{ij}$ are real numbers. Conversely, every linear mapping of the form defined in \eqref{eq:form} is equivariant. 
\end{restatable}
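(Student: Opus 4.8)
The plan is to combine the irreducible decomposition of $\V^k$ from Theorem~\ref{thm:wreath_irreps} with Schur's lemma (the general recipe behind \eqref{eq:Tgeneral}) and then translate the resulting abstract parametrization into the explicit inner-product form \eqref{eq:form}. Throughout, write $\V_{nontrivial}=\oplus_{j=1}^t\V_j$ for the sum of the non-trivial irreducibles, so $\V=\V_{fixed}\oplus\V_{nontrivial}$; since isotypic components are orthogonal under any $\G$-invariant inner product, this splitting is orthogonal, and in particular $\langle v,e_i\rangle=\langle \pi v,e_i\rangle$ for the orthogonal projection $\pi:\V\to\V_{fixed}$. Consequently $\V^k=\V_{fixed}^{k}\oplus\V_{nontrivial}^{k}$, where by Theorem~\ref{thm:wreath_irreps} we have $\V_{fixed}^{k}=\oplus_{i=1}^s(\Scalar_{i,0}^k\oplus\Scalar_{i,1}^k)$, while $\V_{nontrivial}^{k}=\oplus_{j=1}^t\V_j^{k}$ with each $\V_j^{k}$ irreducible.

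First I would show that $L$ decomposes as $L=L_{\mathrm{fix}}\oplus L_{\mathrm{nt}}$ along $\V^k=\V_{fixed}^{k}\oplus\V_{nontrivial}^{k}$: restricting the $\G\wr S_k$-action to the base subgroup $\G\times\cdots\times\G$, the first summand is a trivial base-group representation while the second has no non-zero base-group-invariant vectors (each $\V_j$ has no trivial subrepresentation), so no equivariant map connects them. Next I would identify $L_{\mathrm{nt}}$: restricting again to the base subgroup, the $k$ coordinate copies inside $\V_{nontrivial}^{k}$ are pairwise non-isomorphic as base-group modules (each $\V_j$ is non-trivial), so the base-group commutant is $\bigoplus_{\ell=1}^k\mathrm{End}_\G(\V_{nontrivial})$, on which $S_k$ acts by permuting the summands; taking $S_k$-invariants leaves only the diagonal, i.e.\ $L_{\mathrm{nt}}(v_1,\dots,v_k)=(\psi v_1,\dots,\psi v_k)$ for some $\psi\in\mathrm{End}_\G(\V_{nontrivial})$. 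Equivalently, this is Schur's lemma applied to the irreducible $\V_j^{k}$, using their absolute irreducibility from Appendix~\ref{power_irred}.

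Then I would identify $L_{\mathrm{fix}}$: since $\G$ acts trivially on $\V_{fixed}$, equivariant self-maps of $\V_{fixed}^{k}\cong\V_{fixed}\otimes\RR^k$ form exactly $\mathrm{End}(\V_{fixed})\otimes\mathrm{End}_{S_k}(\RR^k)$, and $\mathrm{End}_{S_k}(\RR^k)=\mathrm{span}\{I_k,J_k\}$, so $L_{\mathrm{fix}}(v_1,\dots,v_k)=\big(Av_\ell+B\textstyle\sum_{\ell'}v_{\ell'}\big)_{\ell=1}^k$ for arbitrary $A,B\in\mathrm{End}(\V_{fixed})$ (this can also be read off from the $\oplus_i(\Scalar_{i,0}^k\oplus\Scalar_{i,1}^k)$ decomposition, the $\Scalar_{i,\bullet}^k$-blocks contributing $s^2$ scalars each). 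Combining, set $\hat L:=A\oplus\psi\in\mathrm{End}_\G(\V)$ — legitimate since no equivariant self-map of $\V$ mixes $\V_{fixed}$ and $\V_{nontrivial}$ — so that the $\ell$-th coordinate of $L(v_1,\dots,v_k)$ is $\hat L v_\ell+B\sum_{\ell'}\pi v_{\ell'}$. Finally I would write $B=\sum_{i,j=1}^s a_{ij}\langle\,\cdot\,,e_i\rangle e_j$, which is possible because $\{v\mapsto\langle v,e_i\rangle e_j\}_{i,j=1}^s$ is a basis of $\mathrm{End}(\V_{fixed})$ (non-degeneracy of the Gram matrix $(\langle e_i,e_j\rangle)_{i,j}$), and use $B\sum_{\ell'}\pi v_{\ell'}=\sum_{i,j}a_{ij}\big(\sum_{\ell'}\langle v_{\ell'},e_i\rangle\big)e_j$ to obtain \eqref{eq:form}. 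The converse is immediate: $(\hat L v_\ell)_\ell$ is equivariant because $\hat L$ is, and the broadcast term is equivariant because $\sum_\ell\langle v_\ell,e_i\rangle$ is $\G\wr S_k$-invariant (each $g_m$ fixes $e_i$, and $\tau$ merely reindexes the sum) while a constant $\V_{fixed}$-valued $k$-tuple is fixed by $\G\wr S_k$.

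I expect the crux to be the two "no extra maps" claims in the middle step: that $\V_{fixed}^{k}$ and $\V_{nontrivial}^{k}$ do not interact, and that the only equivariant self-maps of $\V_{nontrivial}^{k}$ are the componentwise (Siamese) ones. Both rest on the non-triviality of the $\V_j$ — which forces the $k$ coordinate copies to be mutually non-isomorphic as base-group modules, so $S_k$ has nothing to glue beyond the diagonal — together with the (absolute) irreducibility of the $\V_j^{k}$. Once these are in hand, the remaining steps are routine linear algebra and bookkeeping.
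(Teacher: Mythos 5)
Your proposal is correct and follows essentially the same strategy as the paper: orthogonally split $\V=\V_{fixed}\oplus\V_{\perp}$ (you call the complement $\V_{nontrivial}$), show no equivariant maps cross the two summands, show the complement contributes only Siamese maps, and read off the extra maps on $\V_{fixed}^k$ from the DeepSets characterization of $S_k$-equivariant endomorphisms of $\RR^k$. The one place you genuinely diverge is the Siamese step: the paper explicitly constructs $\hat{L}(v):=L_1(v,0,\ldots,0)$, checks it is $\G$-equivariant, and uses equivariance under $(g,e,\ldots,e)$ to force $L_1(0,v_2,\ldots,v_k)=0$ (it lies in $\V_{fixed}\cap\V_{\perp}=\{0\}$); you instead observe that as $\G^k$-modules the $k$ coordinate copies of $\V_{nontrivial}$ are pairwise non-isomorphic (distinct coordinate $g_\ell$ acts non-trivially in each), so the base-group commutant is block-diagonal and taking $S_k$-invariants under conjugation collapses it to the diagonal. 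Both routes hinge on the same fact --- non-triviality of the complement --- but yours is the more structural/commutant-theoretic phrasing while the paper's is more elementary and self-contained. A small additional difference: the paper reduces WLOG to an orthonormal basis $e_i$ before expanding the extra maps, whereas you work with an arbitrary basis and justify that $\{v\mapsto\langle v,e_i\rangle e_j\}_{i,j}$ spans $\mathrm{End}(\V_{fixed})$ via non-degeneracy of the Gram matrix; this is clean and avoids having to argue basis-independence of the span of non-Siamese maps.
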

The theorem is proven in  Appendix \ref{app:wreath}. We note that an invariant inner product on $\V$ is an inner product satisfying $\langle gv,gu \rangle=\langle v,u \rangle$ for all $g\in \G$ and $u,v\in \V$. When $\G$ is finite, an invariant inner product always exists: it can be obtained by starting from an arbitrary inner product and then averaging over the group \citep{fulton2013representation}. 
In the examples we consider in this paper, the group $\G$ acts by permutations. In this case, the standard $\ell_2$ inner product is invariant. Implementing the non-Siamese layers defined in \eqref{characterization} only requires finding a basis for $\V_{fixed}$. In particular, if $\V$ is one of the spaces discussed previously, the basis $e_i$ is just a choice of an element from each of the trivial spaces $\Scalar_i$ in the decomposition of $\V$. 

\paragraph{Beyond $S_k$.} So far, we have considered the action of $\G \wr S_k$ where $\G$ is finite. We can also generalize these results to the setting $\G \wr H $, where $H$ is a subgroup of $S_k$ which acts transitively on $\{1,\ldots,k\} $. By doing this, we are generalizing the results in \cite{wreath}, which assumes that both $\G$ and $\H$ act transitively. 

The full generalization is described in Appendix \ref{app:beyond}. The general idea is this: Since $\G$ acts trivially on $\V_{fixed}$, the action of $\G \wr H $ on $\V_{fixed}^k$ can be identified with the action of $H$ on $\V_{fixed}^k$. In turn, $\V_{fixed}^k$ is a direct sum of $s$ different copies of $\RR^k$. Accordingly, the number of $H$ equivariant maps from $\V_{fixed}^k$ to itself is $s^2\cdot h$, where $h$ is the number of $H$ equivariant maps from $\RR^k$ to itself. One of these $h$ maps is the identity map, which is Siamese; therefore, the total number of non-Siamese equivariant maps is $(h-1)\cdot s^2 $. In the case where $H=S_k$, we have that $h=2$, and therefore, in our analysis above, we have found $s^2$ non-Siamese maps.
\section{Experiments}

In this section, we consider problems with a wreath-equivariant structure. We compare networks implementing the complete basis of equivariant layers that we have found, with networks that only use Siamese layers or combine a partial list of non-Siamese layers suggested in previous works. Implementation details of all experiments are described in \autoref{implementaion_details}.

\paragraph{Graph Anomaly Detection.}
We begin with a synthetic wreath-equivariant task in which Siamese networks will fail by design: we consider a graph anomaly detection problem, where the input is $k$ graphs with $n$ nodes, $(\mathbb{G}_1,\ldots,\mathbb{G}_k) $, where most graphs are similar, and one is an 'anomaly.'  The output is a $k$ dimensional probability vector, where the $i$-th entry denotes the probability that $\mathbb{G}_i$ is an anomaly. This task is $\G \wr S_k$ equivariant (where $\G=S_n$), where the action on the input space is as in \eqref{eq:wreath_action}, and the action on the output space is just permuting the entries of the probability vector.
We generate data for this problem as follows: We randomly generate two graphs $\mathbb{G}$ and $\hat{\mathbb{G}}$ using the Erdos-Reyni distribution. We take $k-1$ copies of the graphs $(\mathbb{G}_1,\ldots,\mathbb{G}_{k-1})$ to be permuted copies of $\mathbb{G}$, and one of them to be $\hat{\mathbb{G}}$ and insert it in a random location. We also add some noise with variance $\eta$ to all graphs.
                              
We consider equivariant models for this task, composed of several linear wreath-equivariant layers and point-wise ReLU activations. The final layer is a point-wise summation of each of the $k$ graphs to obtain a final vector in $\RR^k$, followed by a softmax layer to obtain a probability vector. For the linear wreath-equivariant layers, we consider several alternatives:
Siamese layers only, adding the single additional non-Siamese layer suggested in \cite{wreath,maron2020learning} denoted by DSS, and the full model we suggest, which has four non-Siamese layers (we name our model SchurNet). The Siamese layers are implemented using the decomposition computed in  Subsection \ref{sub:graph}.


\begin{table}[ht]
    \centering
    \caption{Anomaly detection accuracy under different noise levels.}
    \label{tab:anomaly_acc}
    \begin{tabular}{lcc}
        \toprule
        \textbf{Model} & \(\boldsymbol{\eta = 0.0}\) & \(\boldsymbol{\eta = 0.1}\) \\
        \midrule
        Siamese            & 10.0\%           & 10.0\%          \\
        DSS                & 97.5\%           & 92.0\%          \\
        \textbf{SchurNet (Ours)}
                           & \textbf{100.0\%} & \textbf{97.0\%} \\
        \bottomrule
    \end{tabular}
\end{table}

The results of this experiment are shown in \autoref{tab:anomaly_acc}. As we can see, Siamese networks attain 10\% accuracy without noise (and even lower accuracy with noise). This is to be expected since Siamese features can be useful to differentiate between $\mathbb{G}$ and $\hat {\mathbb{G}}$, but not to determine how many times each one of them occurs in a $k$-tuple.
In contrast, networks with non-Siamese layers attain significantly better performance, whereas our method, which contains the maximal number of non-Siamese layers, attains the best performance. 

```latex

\begin{table}[ht]
    \centering
    \small
    \setlength{\tabcolsep}{6pt}
    \renewcommand{\arraystretch}{1.05}

    \caption{Comparison of SchurNet and NProductNet. 
    The best result in each setting is shown in bold, and the second-best is underlined.}
    \label{table:performance_comparison}

    \begin{tabular}{llcc}
        \toprule
        \textbf{Dataset}
        & \textbf{Input size}
        & \textbf{SchurNet}
        & \textbf{NProductNet} \\
        \midrule

        \multirow{2}{*}{noisy-sphere-3}
            & \([100, 300]\) & \textbf{0.0389} & \underline{0.0460} \\
            & \([300, 500]\) & \textbf{0.1026} & \underline{0.1580} \\

        \addlinespace[3pt]
        \multirow{2}{*}{noisy-sphere-6}
            & \([100, 300]\) & \underline{0.0217} & \textbf{0.0150} \\
            & \([300, 500]\) & \underline{0.0795} & \textbf{0.0490} \\

        \addlinespace[3pt]
        \multirow{2}{*}{uniform}
            & \(256\)        & \underline{0.0974} & \textbf{0.0970} \\
            & \([200, 300]\) & \textbf{0.1043} & \underline{0.1089} \\

        \addlinespace[3pt]
        \multirow{2}{*}{ModelNet-small}
            & \([20, 200]\)  & \textbf{0.0623} & \underline{0.0840} \\
            & \([300, 500]\) & \textbf{0.0738} & \underline{0.1110} \\

        \addlinespace[3pt]
        \multirow{2}{*}{ModelNet-large}
            & \(2048\)         & \textbf{0.0468} & \underline{0.1400} \\
            & \([1800, 2000]\) & \textbf{0.0551} & \underline{0.1620} \\

        \addlinespace[3pt]
        \multirow{2}{*}{RNA-seq}
            & \([20, 200]\)  & \underline{0.0123} & \textbf{0.0120} \\
            & \([300, 500]\) & \textbf{0.0334} & \underline{0.2920} \\

        \bottomrule
    \end{tabular}
\end{table}
```

\paragraph{Wasserstein Distance Computation}
We next consider the task of learning the Wasserstein distance, as discussed in \cite{amir2024injective,Samanta,haviv2024wasserstein}.  The Wasserstein distance between two unordered multisets of vectors in $\RR^d$, denoted by $\{x_1,\ldots,x_n \} $ and $\{y_1,\ldots,y_n \} $, is defined to be the minimal distance between the sets, under the optimal permutation giving the best alignment between them (as in the alignment problem discussed in the beginning of Section \ref{sec:wreath}). Computing Wasserstein distances can be computationally intensive, so this task aims to devise a neural network to learn the distance instead. 
Learning Wasserstein distances is a $S_n \wr S_2 $ invariant task. In \cite{Samanta}, a Siamese approach named NProductNet is suggested to address this problem.
We compare SchurNet to \cite{Samanta} to demonstrate that adding our non-siamese layers enhances the performance of standard Siamese layers. We note that \cite{amir2024injective} achieved state-of-the-art results for this task with a very different method, which does not follow the conventional paradigm of a stack of linear layers and non-linear activation functions. Therefore, we have no direct way of adding our non-siamese layers to their method for comparison.
In our experiments, we add the additional  $d^2$  non-Siamese layers to their implementation and compare performance on the datasets addressed in their paper. The results,  reported in Table \ref{table:performance_comparison}, show that adding non-Siamese layers improves performance in most cases. We note that each dataset has two options: the top option in each table row depicts the case where training and test distributions had the same dimension, while the bottom option checks the generalization to test distribution of different sizes.  
\paragraph{Weight Space Alignment.}
We consider the alignment task discussed above in the setting where $v_1,v_2$ are elements in weight spaces, and the task is to find the group element that optimally aligns $v_1$ and $v_2$.  One interesting application of this problem is model merging: in \cite{rebasin}, it was shown that linear interpolation of $v_1,v_2$ gives a new network whose performance is considerably worse, but interpolation after alignment leads to much better results. 

The weight space alignment problem was considered in \cite{rebasin,NEURIPS2020_aecad423,pena2023re}. Recently, \cite{DWS-align} outperformed these methods using a learning approach based on the DWS layers from \cite{DWS}, applied to $v_1,v_2$ in a Siamese fashion. This experiment aims to check whether adding non-Siamese layers improves their results. 
We take only part of the described layers in
\autoref{characterization} layers: the mappings from each weight/bias space to itself, excluding, for example, non-Siamese weight to other weight or weight to bias. This added fewer parameters and generalized better than taking all mappings.
\begin{table}[h!]
\centering
\begin{tabular}{lcccc}
\toprule
\textbf{Model} & \multicolumn{2}{c}{\textbf{MNIST}} & \multicolumn{2}{c}{\textbf{CIFAR10}} \\
\cmidrule(lr){2-3} \cmidrule(lr){4-5}
               & \textbf{Acc(↓)} & \textbf{Loss (↓)} & \textbf{Acc(↓)} & \textbf{Loss (↓)} \\
\midrule
\textbf{SchurNet (Ours)}    & \textbf{1.25e-5}   & \textbf{0.251346}  & \textbf{0.0}   & \textbf{1.7822}  \\
\textbf{Siamese}  & 1.5e-5    & 0.262913    & 1.0e-4        & 1.7876   \\
\bottomrule
\end{tabular}
\caption{Comparison of SchurNet and Siamese models on MNIST and CIFAR10.}
\end{table}
\vspace{-10pt}

We run two sets of experiments from \cite{DWS-align}: one on MLPs trained on MNIST \citep{mnist} and one on MLPs trained on the CIFAR10 \citep{cifar10} dataset. We trained both models for 100 epochs and reported the test accuracy and the reconstruction loss. We ran several hyper-parameter configurations for both methods to ensure a fair comparison and reported the results with the best reconstruction loss.  As we can see, our model with non-Siamese layers outperforms the Siamese network from \cite{DWS-align} in all settings.

\paragraph{Conclusion.}
In this paper, we revisited the idea of using irreducible representations instead of parameter-sharing to characterize equivariant linear layers for representations of permutations and related groups. Using this approach, we obtained alternative derivations for the characterizations of equivariant layers from DeepSets and $2$-IGN, and a significantly simplified derivation of deep weight spaces equivariant layers. We have also obtained a previously unknown characterization for the linear equivariant layers of wreath products $\G \wr S_n$, and showed the benefits of using the full characterization for several wreath-equivariant tasks. 
In general, looking forward to other yet unknown applications, what could be the benefit of using the irreducible approach? One answer is reduced book-keeping, as exemplified in the deep weight space example. In general, the number of equivariant layers can be, in the worst case,  quadratic in the number of irreducibles. When this happens, the irreducible approach is expected to lead to simpler characterizations that are easier to implement. Additionally, networks like $k$-IGN \citep{IGN}, which are based on tensor representations, use intermediate features in $n^k$, and using irreducibles could lead to new equivariant models with intermediate irreducible features of lower dimensions. Finding the irreducible decompositions of $k$-IGN when $k>2$ and the potential applications of this decomposition is an interesting avenue for future work.

\section{Acknowledgments}
We would like to extend our thanks to Eitan Rosen for the valuable discussions. We also sincerely thank Idan Tankel for his technical support, as without your help, none of this would have worked as it has. The authors are supported by ISF grant 272/23.

\bibliography{iclr2025_conference}
\bibliographystyle{iclr2025_conference}

\appendix
\section{Implementation details}
\label{implementaion_details}
In this section, we detail the hyper-parameters used through learning.
In all our models we used Adam or AdamW optimizers.
\paragraph{Graph Anomaly Detection}
In this set of experiments we run each model with 27 different hyper-parameters and reported the best for each model. We run through different learning rates, weight decays and learning rate decay. In our model, we used four layers, each with a hidden dimension of $256$ and ReLU activation.
\paragraph{Deep weight space alignment}
In this section we took the model of \cite{DWS-align} and added our non-Siamese common layers. We searched for each model and dataset over four hyper-parameters and for each reported the best performance. We trained for $100$ epochs and reported the reconstruction loss.
\paragraph{Wasserstein Distance computation}
In this set we took the model of \cite{Samanta} and added the non-Siamese layers. We trained for $200$ epochs and reported the relative absolute mean error. In most of the experiments we used learning rate of $1e^{-4}$ and weight decay of $1e^{-5}$.

\section{Absolutely Irreducible Real Representations}\label{app:absolutely}
Here we give some more details on the concept of absolutely irreducible real representations. 

As discussed in the main text, a real irreducible representation $\V$ is called \emph{absolutely irreducible} if its complexification is irreducible over $\CC$. The complexification is denoted by $c\V$. It can be defined by choosing a basis $v_1,\ldots,v_n$ to $\V$ and then defining 
$$c\V=\{\sum_{i=1}^n c_iv_i, \quad c_i \in \CC \} .$$

Let $T:\V \to \V$ be an equivariant non-zero mapping, and assume $\V$ is absolutely irreducible. Then $Tv=\lambda v$ for some real $\lambda$. This is because $T$ can be linearly extended to a non-zero linear mapping $T:c\V \to c\V$. Similarly, the group action of $\G$ which is linear can also be extended to $c\V$. The extension of $T$ to $c\V$ will be equivariant, and therefore, since $c\V$ is irreducible by assumption,  Schur's Lemma for complex representations implies that $T=\lambda I$ for $\lambda \in \CC$. Since $T\V=\V$, we know $\lambda$ must be real. 

\paragraph{Real irreducible permutation representations are absolutely irreducible.}
Classical representation theory results (see e.g., 
\cite{fulton2013representation}) characterize all complex irreducible representation of the permutation group, up to isomorphism. In this process, they show that all these representations can be defined over the rational numbers. This means that, for any complex irreducible representation $\U$ of the permutation group $S_n$, there exists a basis in which for every $g\in \G_n$, the matrix representing the linear mapping defined by $g$ will be rational.

As explained, e.g., in   \cite{overflow}, this fact implies that every irreducible $s$ dimensional real representation $\V$ is absolutely irreducible. Indeed, if $c\V$ were not irreducible over $\CC$, then $c\V=\V_1\oplus \V_2$ where $\V_1$ is an invariant (complex) irreducible subspace, and the dimension $t$ of $\V_1$ satisfies $0<t<s$. In an appropriate bases $u_1,\ldots,u_t$ of $\V_1$ we will have that the matrix representing each $g\in S_n$ is rational. We can write each $u_j$ as 
$$u_j=\sum_{k=1}^n (a_{jk}+ib_{jk})v_k .$$
One can then verify that the $t$ dimensional real space $\hat \V_1$ defined by 
$$\hat u_j= \sum_{k=1}^n a_{jk}v_k$$
is a (real) $\G$ invariant subspace of $\V$ and has dimension $t<s$, which contradicts the fact that $\V$ is irreducible.

\section{graph networks irreducible decomposition}
\label{decomposition}
In the main text, we claim that, for $n\geq 4$, the representation $\RR^{n\times n}$ of the permutation group $S_n$ can be decomposed into a direct sum of seven irreducible representations:
\begin{align*}
    \V_{0} &= \{ a \cdot I_n|a\in \R \}\\
    \V_{1} &= \{a \cdot (1_{n\times n} -  I_n)|a\in \R \}\\
    \V_{2} &= \{ Diag(\alpha_{1},..,\alpha_{n})|  \sum^{n}_{i=1} \alpha_{i} = 0 \}\\
\V_3&=\{r1_n^T|  \sum_{i=1}^n r_i=0 \}\\
\V_4&=\{1_nc^T|  \sum_{i=1}^n c_i=0 \} \\
\V_5&=\{A|  A=-A^T, A1_n=0\}\\
\V_6&=\{A| A=A^T, A1_n=0,  A_{ii}=0, \forall i=1,\ldots,n\}.\\	
\end{align*}

We begin by giving an algorithm describing how every $n$ by $n$ matrix can be decomposed into a sum of matrices from these spaces, with a computational complexity of  $O(n^2)$. This is necessary for a 2-IGN implementation which is based on irreducibles, and is also a first step towards a full proof of \cref{thm:graphs}. This full proof will be presented after the algorithm.

\subsection{Decomposition algorithm}
\label{sub:decomposition}
We will give an algorithm to decompose a matrix $A\in \RR^{n\times n}$ into a sum of matrices from these seven sub-spaces. We assume $n \geq  4$ (actually, when $n=3$ the algorithm works as well, but in this case $\V_6$ is zero dimensional as discussed in the main text). 

The first step of the algorithm is to write 
$$A=B+\begin{pmatrix}
	0 & a & \ldots &a\\
	a & 0 & \ldots &a\\
	\vdots & & & \\
	a & a & \ldots &0\\
\end{pmatrix}
+\begin{pmatrix}
	b &  &  &\\
	 & b &  &\\
	\vdots & & & \\
	 &  & \ldots &b\\
\end{pmatrix}
 $$	
 where the two matrices from the right are in $\V_0$ and $\V_1$, and $a$ and $b$ are the average of off-diagonal and diagonal elements of $A$, respectively, so that the diagonal and off-diagonal elements of $B$ both sum to zero. It remains to find a decomposition for $B$. The crucial part of this is the following lemma
 \begin{lemma}
 Let $n>2$ be a natural number, and let $B\in \RR^{n\times n}$ be a matrix whose diagonal elements sum to zero, and off-diagonal elements sum to zero. Then there exists a matrix $C$ in 
  \begin{equation}\label{eq:hatV}
  \hat \V=\{C\in \RR^{n\times n}| \quad, C1_n=0, C^T1_n=0 \text{ and }  C_{ii}=0, \forall i=1,\ldots,n  \} \end{equation}
  and vectors $r,c,d\in \RR^n$ which all sum to zero, such that
 \begin{equation}\label{eq:mainDecomposition}
 C=B+r1_n^T+1_nc^T+diag(d). 
 \end{equation}
 Moreover, this decomposition of $B$ is unique.
 \end{lemma}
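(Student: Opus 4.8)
The plan is to turn the statement into an explicit linear system and solve it coordinate‑by‑coordinate. Writing out the claimed identity entrywise, $C_{ij}=B_{ij}+r_i+c_j+\delta_{ij}d_i$, so membership $C\in\hat\V$ amounts to three families of linear conditions on the unknowns $(r,c,d)$: the diagonal condition $C_{ii}=0$ for all $i$, the row‑sum condition $C1_n=0$, and the column‑sum condition $C^\top 1_n=0$. To these we append the three normalization constraints $\sum_i r_i=\sum_i c_i=\sum_i d_i=0$. I would first use $C_{ii}=0$ to eliminate $d$, namely $d_i=-B_{ii}-r_i-c_i$; note that given $\sum_i B_{ii}=0$ and the sum‑zero constraints on $r,c$, this $d$ automatically satisfies $\sum_i d_i=0$, so the $d$‑normalization is free.

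Substituting this expression for $d$ into the row‑ and column‑sum conditions, I expect each to reduce to a decoupled system indexed by $i$: setting $\alpha_i:=B_{ii}-(B1_n)_i$ and $\beta_i:=B_{ii}-(B^\top 1_n)_i$, the row‑ and column‑sum equations at index $i$ become
\begin{equation*}
\begin{pmatrix} n-1 & -1\\ -1 & n-1\end{pmatrix}\begin{pmatrix} r_i\\ c_i\end{pmatrix}=\begin{pmatrix}\alpha_i\\ \beta_i\end{pmatrix}.
\end{equation*}
The coefficient matrix has determinant $(n-1)^2-1=n(n-2)$, which is nonzero precisely because $n>2$; hence for each $i$ there is a unique $(r_i,c_i)$, and therefore a unique $(r,c,d)$ and a unique $C$. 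Since any legitimate decomposition of $B$ must satisfy exactly these equations, this already establishes the uniqueness claim.

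It remains to verify existence, i.e.\ that the unique solution just produced actually meets the two outstanding normalization constraints $\sum_i r_i=\sum_i c_i=0$ (the $d$‑constraint was already handled above). Summing the explicit solution over $i$, everything reduces to $\sum_i\alpha_i$ and $\sum_i\beta_i$, and one computes $\sum_i\alpha_i=\sum_i B_{ii}-\sum_{i,j}B_{ij}$ and likewise for $\beta$; both vanish once we know that \emph{all} entries of $B$ sum to zero, which is exactly the combination of the two hypotheses on $B$ (diagonal sums to zero \emph{and} off‑diagonal sums to zero). This is really the only nontrivial point of the argument: it is the step where the ``off‑diagonal entries sum to zero'' hypothesis is genuinely used (the diagonal hypothesis alone would not suffice), and it is what guarantees the appended normalization constraints are consistent rather than over‑determining the system. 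Once this is checked, the matrix $C:=B+r1_n^\top+1_nc^\top+\mathrm{diag}(d)$ lies in $\hat\V$ by construction, completing the proof.
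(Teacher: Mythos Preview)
Your proposal is correct and follows essentially the same approach as the paper: both set up the linear system arising from the constraints $C\in\hat\V$, eliminate $d$ via the diagonal condition, and reduce to the same decoupled $2\times 2$ system with coefficient matrix $\begin{pmatrix} n-1 & -1\\ -1 & n-1\end{pmatrix}$ of determinant $n(n-2)\neq 0$. The only organizational difference is that the paper singles out the index $i=n$ and eliminates it up front by redundancy, whereas you treat all indices symmetrically and verify the sum-zero constraints a posteriori by summing the solved equations; this is a cosmetic variation, not a different argument.
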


 Stated differently, the matrix $B$ will be a linear combination of matrices $r1_n^T,1_nc^T$ and $diag(d)$ which are in $\V_3,\V_4$ and $\V_2$, repsectively, and the matrix $\C$ which is in $\hat \V$. Once we prove the lemma, we will conclude by showing that a matrix in $\hat \V$ can be written as a sum of matrices in $\V_5$ and $\V_6$, which will conclude the argument.
 \begin{proof}[Proof of the Lemma.]
Let us denote the sum of the $i$-th row and column of $B$ by $r_i^B $ and $c_i^B$ respectively. Denote $d_i^B=B_{ii}$. Note that the vectors $r^B,c^B$, and $d^B$ all sum to zero by our assumption on $B$. 

We need to show there exists vectors $r,d,c\in \RR^n$ satisfying \eqref{eq:mainDecomposition} for an appropriate $C \in \hat \V$. This will occur if and only if  the following equations are satisfied 
 
 \begin{align*}
 \sum_{i=1}^n r_i&=0\\
 \sum_{i=1}^n c_i&=0\\
 \sum_{i=1}^n d_i&=0\\
 r_i+c_i+d_i&=-d^B_i, \quad \forall i=1,\ldots,n\\
 nr_i+\sum_{j=1}^nc_j+d_i&=-r^B_i,  \quad \forall i=1,\ldots,n\\
  nc_i+\sum_{j=1}^nr_j+d_i&=-c^B_i,  \quad \forall i=1,\ldots,n.\\
 \end{align*}
 Here the first three constraints are the requirements that the vectors $r,c,d$ all sum to zero, and the last three constraints (each of which are actually $n$ constraints) follow from  our requirement that the diagonal elements of $C$ will be zero and that the rows and columns of $C$ sum to zero. 
  
Next, we note that since the vectors $r,c,d,r^B,c^B,d^B$ all sum to zero, if the last three equations above are satisfied for $i=1,\ldots,n-1$, they will automatically be satisfied for $i=n$. That is, these equations are equivalent to
 \begin{align*}
 	\sum_{i=1}^n r_i&=0\\
 	\sum_{i=1}^n c_i&=0\\
 	\sum_{i=1}^n d_i&=0\\
 r_i+c_i+d_i&=-d^B_i, \quad \forall i=1,\ldots,n-1\\
 nr_i+d_i&=-r^B_i,  \quad \forall i=1,\ldots,n-1\\
 nc_i+d_i&=-c^B_i,  \quad \forall i=1,\ldots,n-1\\
 \end{align*}
 This gives us a $3n$ linear equations system in $3n$ variables. We will show this equation has a unique solution and compute it explicitly. First, we eliminate the variables $r_n,c_n$ and $d_1,\ldots,d_n$ by setting
 \begin{equation}\label{eq:d}
 	d_i=-d_i^B-r_i-c_i, \quad i=1,\ldots,n-1 \end{equation}
 and 
 \begin{equation}\label{eq:rest}
 r_n=-\sum_{j=1}^{n-1}r_j, \quad c_n=-\sum_{j=1}^{n-1}c_j, \quad d_n=-\sum_{j=1}^{n-1}d_j
 \end{equation}
This ensures that the first four equations above are satisfied, and we are left with the task of choosing $r_1,\ldots,r_{n-1},c_1,\ldots,c_{n-1}$ satisfying the last two equations (which are actually $2n-2$ equations). By replacing the eliminated variables $d_i$ we obtain the equations
\begin{align*}
	 (n-1)r_i-c_i&=-r^B_i+d_i^B,  \quad \forall i=1,\ldots,n-1\\
	(n-1)c_i-r_i&=-c^B_i+d_i^B,  \quad \forall i=1,\ldots,n-1\\
	\end{align*}
	For every $i$, there are two equations in $r_i$ and $c_i$. Since we assume $n\geq 4$, this equation has a unique solution
	\begin{align*}
	\begin{pmatrix}
	r_i\\
	c_i
	\end{pmatrix}=
	\begin{pmatrix}
	n-1 & -1\\
	-1 & n-1\\	
		\end{pmatrix}^{-1}
	\begin{pmatrix}
	-r^B_i+d_i^B,\\
	-c^B_i+d_i^B,
	\end{pmatrix}
	\end{align*}
 And we can derive
 \begin{align*}
	\begin{pmatrix}
	r_i\\
	c_i
	\end{pmatrix}=
    \frac{1}{n^2-2n}
	\begin{pmatrix}
	n-1 & 1\\
	1 & n-1\\	
		\end{pmatrix}
	\begin{pmatrix}
	-r^B_i+d_i^B,\\
	-c^B_i+d_i^B,
	\end{pmatrix}
	\end{align*}
 
 \begin{align*}
	\begin{pmatrix}
	r_i\\
	c_i
	\end{pmatrix}=
	\begin{pmatrix}
	nd_i^B - (n-1)r^B_i - c_i^B,\\
	nd_i^B - (n-1)c^B_i - r_i^B,
	\end{pmatrix}
	\end{align*}
 We can then reconstruct $d_1,\ldots,d_{n-1}$ from \eqref{eq:d} and $c_n,r_n,d_n$ from \eqref{eq:rest}. This concludes the proof.
 \end{proof}

Given the proof, it remains to decompose a matrix $C\in \hat{\V}$ into a sum of two matrices in $\V_5, \V_6$. This can be simply done by writing 
$$C=\frac{1}{2}(C+C^T)+\frac{1}{2}(C-C^T) $$
and noting that the matrices $\frac{1}{2}(C-C^T)$ and $\frac{1}{2}(C+C^T)$ are in $\V_5$ and $\V_6$. 

\subsection{isomorphism types}
We now restate and prove the theorem on 2-IGN stated in the main text
\graphs*
\begin{proof}
The decomposition given in Subsection \ref{sub:decomposition} is unique when $n\geq 4$, and thus we see that 
$$\RR^{n^2}=\V_0\oplus \ldots \oplus \V_6.  $$
It is also clear that all the spaces $\V_i$ are invariant, that $\V_0,\V_1$ are isomorphic to the irreducible trivial space, and $\V_2,\V_3,\V_4$ are isomorphic to the irreducible space $\V(n)$. It remains to show: (a) That $\V_5,\V_6$ are not zero-dimensional, (b) that they are irreducible, and (c) that both $\V_5$ and $\V_6$ are not isomorphic to any of the other spaces.

We handle (a) by computing the dimension of $\V_5$ and $\V_6$. We note that a matrix $A \in \V_5$ can be parameterized by freely choosing all upper diagonal elements $A_{ij}$ with $j-i>1$. The elements $A_{ji}$ with $j-i>1$ are then determined  by the constraint $A_{ji}=-A_{ij}$, and then the coordinates $A_{12}=-A_{21},A_{23}=-A_{32},\ldots$ are determined recursively by the constraint that the first $n-1$ rows of $A$ will sum to zero. The obtained matrix $A$ satisfies $A=-A^T$ and its first $n-1$ rows sum to zero. It follows from $A=-A^T$ that the sum of all elements of $A$ is zero and hence the last row of $A$ must also sum to zero and $A$ is indeed in $\V_5$.

To summarize, since $\V_5$ is parameterized by  the upper diagonal elements $A_{ij}$ with $j-i>1$ we deduce that 
$$\dim(\V_5)=(n^2-n)/2-(n-1)=\frac{n^2-3n}{2}+1. $$
We compute the dimension of $\V_6$ by noting that $\hat \V$ from \eqref{eq:hatV} is a direct sum of $\V_5$ and $\V_6$. The space $\hat \V$ can be parameterized by choosing an $(n-1)\times (n-1)$ matrix $A$ with zero on the diagonal, and whose elements sum to zero. There is then a unique extension of this matrix to an $n$ by $n$ matrix $\hat A$ which is in $\hat \V$ and satisfies $\hat A_{ij}= A_{ij}$ for all $1\leq i,j \leq n-1$. Accordingly, the dimension of $\hat \V$ is $(n-1)^2-(n-1)-1=n^2-3n+1$ and 
$$\dim( \V_6)=\dim(\hat \V)-\dim(\V_5)=\frac{n^2-3n}{2} .$$
It follows that when $n\geq 4$, both $\V_5$ and $\V_6$ have positive dimension. Moreover, $\V_5$ and $\V_6$ are not isomorphic as they do not have the same dimension. The same argument shows that and $\V_6$ is are not isomorphic to the other $\V_i$ with $i=0,\ldots,4$ since they do not have the same dimension for any $n$. In contrast $\V_5$ does have the same dimension as $\V_2$
when $n=4$, as we get
$\dim(\V_5)=3=\dim(\V_2)$. Nonetheless, the spaces are still not isomorphic since the action of $S_n$ on the spaces is different. For example, the subspace of fixed points of the permutation $\tau=(1,2)$ in the space $\V_2\cong \V(n)$ is all $n=4$ dimensional vectors $x$ which sum to zero and satisfy $x_1=x_2$. This space is two dimensional. In contrast, one can verify that the subspace of fixed points of $\tau$ in $\V_5$ is one dimensional. Hence they cannot be isomorphic.

It remains to show that $\V_5$ and $\V_6$ are indeed irreducible. Let $\tilde \V_5 \subseteq \V_5$ and $\tilde \V_6 \subseteq \V_6$ be non-zero invariant subspaces. We need to show that $\tilde \V_j=\V_j$ for $j=5,6$. To show this, we consider the space 
$$\U=\V_0 \oplus \V_1 \oplus \ldots \V_4 \oplus \tilde \V_5 \oplus \tilde \V_6. $$
It is sufficient to show that $\U=\RR^{n\times n}$. Since $\U$ contains all diagonal matrices which are given by $\V_0\oplus \V_2 $, and since it is an invariant subspace, it is sufficient to show that the  matrix $E^{1,2}$ which is zero in all coordinates except for a single coordinate $E_{1,2}^{1,2}=1 $, is in $\U$. 

To show that $E^{1,2}$ is in $\U$,  let us choose a non-zero matrix $A \in \tilde {\V}_5 $ and $\B \in \tilde {\V}_6$. Both matrices have a non-zero off diagonal matrix element, and by applying an appropriate permutation and rescaling we can assume that $A_{12}=1=B_{12} $. Next, let us average over the group $$stab(1,2)=\{\tau\in S_n, \tau(1)=1, \tau(2)=2 \} $$
to obtain new matrices
$$\hat A=\frac{1}{|stab(1,2)|}\sum_{\tau\in stab(1,2)} \tau \cdot A, \quad \hat B=\frac{1}{|stab(1,2)|}\sum_{\tau\in stab(1,2)} \tau \cdot B  $$
Note that $\hat A$ (respectively $\hat B$) is in $\tilde{\V}_5$ (respectively $\tilde{\V}_6$) and satisfy $\hat A_{12}=1=\hat B_{12}$ and
$$\hat A_{1j}= \hat A_{1k}, \hat B_{1j}= \hat B_{1k}  \forall j,k>2 $$
and 
$$\hat A_{jk}=\hat A_{st}, \hat B_{jk}=\hat B_{st} $$
for all $j,k,s,t$ which are all larger than $2$, and such that $j\neq k, s\neq t$. Defining for convenience
$$a=\frac{1}{n-2}, b=\frac{2}{(n-2)(n-3)} ,$$
it follows that 
\begin{equation*}
\hat A=
\begin{pmatrix}
0 & 1& -a& \ldots & -a\\
-1& 0& a& \ldots & a\\
a & -a & 0 & \ldots & 0\\
\vdots & & & &\\
a & -a & 0 & \ldots & 0
\end{pmatrix}
\quad
\hat B=
\begin{pmatrix}
0 & 1& -a& \ldots & -a\\
1& 0& -a& \ldots & -a\\
-a & -a & b & \ldots & b\\
\vdots & & & &\\
-a & -a & b & \ldots & b
\end{pmatrix}
-diag(0,0,b,\ldots,b)
\end{equation*}
It is not difficult to show that any matrix which is constant along the rows is in $\V_0\oplus \ldots \oplus \V_4 \subseteq \U $. The same is true for any matrix which is contant along the columns. Let $C(i)$ denote the matrix with $C_{ij}(i)=1$ for all $ j$ and $C_{kj}(i)=0$ for all $j$ and all $k\neq i$. Then 
\begin{align*}\tilde A:=&\hat A+aC(1)-aC(2)-aC^T(1)+aC^T(2)\\
&= 
\begin{pmatrix}
0 & 1+2a& 0& \ldots & 0\\
-1-2a& 0& 0& \ldots & 0\\
0 & 0 & 0 & \ldots & 0\\
\vdots & & & &\\
0 & 0 & 0 & \ldots & 0
\end{pmatrix}\\
\tilde B:=& \hat B+diag(0,0,b,\ldots,b)-b1_{n\times n}+(a+b)(C(1)+C(2)+C^T(1)+C^T(2))\\
&=
\begin{pmatrix}
2a+b & 1+2a+b& 0& \ldots & 0\\
1+2a+b& 2a+b& 0& \ldots & 0\\
0 & 0 & 0 & \ldots & 0\\
\vdots & & & &\\
0 & 0 & 0 & \ldots & 0
\end{pmatrix}
\end{align*}
where both $\tilde A$ and $\tilde B$ are in $\U$. Since $a,b$ are positive numbers, we can take the following linear combination which zeros out the $(2,1)$ entry and sets the $(1,2)$ entry to one:
$$\frac{1}{2}\left(\frac{1}{1+2a}\tilde A+ \frac{1}{1+2a+b}\tilde B \right)=\begin{pmatrix}
c & 1& 0& \ldots & 0\\
0& d& 0& \ldots & 0\\
0 & 0 & 0 & \ldots & 0\\
\vdots & & & &\\
0 & 0 & 0 & \ldots & 0
\end{pmatrix} $$
for appropriate $c,d$. Since diagonal matrices are in $\U$, we can remove the diagonal matrix $diag(c,d,0,\ldots,0)$ to finally obtain that the matrix $E^{1,2}$ is in $\U$. 
\end{proof}

\subsection{Small $n$}\label{app:small}
 As discussed in \cite{finzi,pearce2022connecting}, when $n<4$ the dimension of the space of equivariant mappings is smaller than 15: it is $14$ for $n=3$ and $8$ for $n=2$. We now explain this according to our derivation. 

For $n=3$, the space $\V_6$ is zero-dimensional. In this case, there are only $2^2+3^2+1=14$ equivariant linear mappings. When $n=2$, both $\V_5$ and $\V_6$ are zero dimensional. Additionally, we can directly verify that in this case, we can "drop" $\V_4$ as well because $\RR^{2\times 2}=\V_0\oplus \V_1 \oplus \V_2\oplus \V_3$. Since $\V_0 \cong \V_1$ and $\V_2 \cong \V_3$ we have  $2^2+2^2=8 $ equivariant linear mappings.

\newpage
\section{Proof for Deep Weight Space decomposition}\label{app:dws}
In this section we fill in some details on the tensor product of representations which were ommitted from the proof of \cref{thm:dws} in the main text. 

At the base of this discussion is the following lemma \citep{serre1977representations}:
\begin{lemma}
Let $\G_1,\G_2$ be finite groups, and let $\V_1,\V_2$ be (real or complex) finite dimensional irreducible representations, with an action denoted by $\rho_1,\rho_2$. Then the action $\rho_1\otimes \rho_2 $ on $\V_1\otimes \V_2$ is an irreducible representation of $\G_1 \times \G_2 $.
\end{lemma}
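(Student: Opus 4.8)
The plan is to establish irreducibility via character theory over $\CC$ and then reduce the real case to the complex one by complexification. First I would recall the two standard ingredients. The character of an external tensor product factors: for representations $\rho_1$ of $\G_1$ and $\rho_2$ of $\G_2$, the representation $\rho_1\otimes\rho_2$ of $\G_1\times\G_2$ satisfies $\chi_{\rho_1\otimes\rho_2}(g_1,g_2)=\chi_{\rho_1}(g_1)\,\chi_{\rho_2}(g_2)$, which is immediate from $\mathrm{tr}(A\otimes B)=\mathrm{tr}(A)\,\mathrm{tr}(B)$ with $A=\rho_1(g_1)$, $B=\rho_2(g_2)$. Second, a finite-dimensional complex representation $W$ of a finite group $H$ is irreducible if and only if $\langle\chi_W,\chi_W\rangle_H=1$, where $\langle\phi,\psi\rangle_H=|H|^{-1}\sum_{h\in H}\phi(h)\overline{\psi(h)}$.

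Next I would carry out the computation. Taking $H=\G_1\times\G_2$ and $W=\V_1\otimes\V_2$, the factorization of the character and of the group sum gives
\[
\langle\chi_{\rho_1\otimes\rho_2},\chi_{\rho_1\otimes\rho_2}\rangle_{\G_1\times\G_2}
=\Bigl(\tfrac{1}{|\G_1|}\sum_{g_1}|\chi_{\rho_1}(g_1)|^2\Bigr)\Bigl(\tfrac{1}{|\G_2|}\sum_{g_2}|\chi_{\rho_2}(g_2)|^2\Bigr)
=\langle\chi_{\rho_1},\chi_{\rho_1}\rangle_{\G_1}\,\langle\chi_{\rho_2},\chi_{\rho_2}\rangle_{\G_2}=1,
\]
using irreducibility of $\V_1$ and $\V_2$. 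By the criterion above, $\V_1\otimes\V_2$ is irreducible over $\CC$.

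For the real case — which is what is actually invoked for the matrix spaces $\Matrix_m$ in \cref{thm:dws} — I would pass through complexification: the relevant real representations are absolutely irreducible (for $\Matrix_m$ one has $\V_1=\Vector_m$, $\V_2=\Vector_{m-1}$, which are absolutely irreducible since all real irreducibles of symmetric groups are, by Appendix \ref{app:absolutely}), so $c\V_1$ and $c\V_2$ are complex irreducible, and $c(\V_1\otimes_\RR\V_2)\cong c\V_1\otimes_\CC c\V_2$ is complex irreducible by the paragraph above; hence $\V_1\otimes_\RR\V_2$ is absolutely irreducible, in particular irreducible.

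I do not expect a real obstacle, as this is a classical result (Serre); the one point demanding care is the real/complex bookkeeping. The unqualified statement ``a tensor product of real irreducibles is irreducible'' is false without an absolute-irreducibility hypothesis, so the argument must go through complexification, and one should confirm that what comes out is absolute irreducibility (not merely irreducibility), since that is exactly what the real version of Schur's lemma needs downstream. A character-free alternative I might mention is via endomorphism algebras: the natural map $\mathrm{End}_{\G_1}(\V_1)\otimes\mathrm{End}_{\G_2}(\V_2)\to\mathrm{End}_{\G_1\times\G_2}(\V_1\otimes\V_2)$ is an isomorphism, so when both factors are scalar (absolute irreducibility) the endomorphism algebra of $\V_1\otimes\V_2$ is scalar, which by semisimplicity (Maschke) forces irreducibility — and this route also transparently explains the failure in the non-absolutely-irreducible case, since e.g. $\CC\otimes_\RR\CC$ is not a division ring. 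The cost of that route is proving the displayed isomorphism of endomorphism algebras, which is why I would present the character argument as the main proof.
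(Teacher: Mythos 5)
The paper offers no proof of this lemma at all---it is stated with a bare citation to Serre---so you are not diverging from an in-paper argument but supplying one. Your character computation is the standard proof for the complex case (essentially the one in the cited source) and is correct. The more valuable part of your write-up is the real-case discussion: you are right that the lemma as printed, with the parenthetical ``(real or complex),'' overstates the result. Without an absolute-irreducibility hypothesis the conclusion fails over $\RR$: take $\G_1=\G_2=\ZZ/3$ acting on $\V_1=\V_2=\RR^2$ by rotation by $2\pi/3$; each $\V_i$ is real-irreducible, but $c\V_i$ splits into two conjugate complex lines, so $c(\V_1\otimes_\RR\V_2)$ has four one-dimensional summands, and since the complexification of a real irreducible can contribute at most two complex irreducible constituents, $\V_1\otimes_\RR\V_2$ must be a direct sum of two real irreducibles. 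Your fix---prove the statement over $\CC$ by characters, then reduce the real case by complexification under absolute irreducibility---is exactly the precise version needed, and it matches how the paper actually deploys the lemma: the absolute irreducibility of $\Matrix_m$ is argued in Appendix D via $c\Matrix_m \cong c\Vector(d_{m-1}) \otimes_\CC c\Vector(d_m)$, i.e.\ through the complex statement, with the real conclusion following by absolute irreducibility of the factors. Your endomorphism-algebra alternative is also sound and gives a clean structural explanation of the failure mode, since $\CC\otimes_\RR\CC\cong\CC\oplus\CC$ is not a division algebra.
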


Firstly, we explain the decomposition of $\hat \W_m$ into irreducibles. We note that the representation $\hat \W_m$ of $\G$ can be thought of as a representation of $S_{d_{m-1}}\times S_{d_m} $, which is a tensor product of the representation $\RR^{d_{m-1}}$ of $S_{d_{m-1}}$  and the representation $\RR^{d_m}$ of $S_{d_m}$. Therefore \citep{serre1977representations}, its irreducible decomposition is given by the tensor product of the irreducible decomposition of its factors, namely
 \begin{align*}
\hat \W_m &\cong \RR^{d_{m-1}}\otimes \RR^{d_{m}} \cong (\Scalar \oplus \Vector(d_{m-1})) \otimes (\Scalar \oplus \Vector(d_{m}))\\
& \cong (\Scalar \otimes \Scalar) \oplus (\Scalar \otimes \Vector(d_{m})) \oplus (\Vector(d_{m-1}) \otimes \Scalar) \oplus (\Vector(d_{m-1}) \otimes \Vector(d_{m}))\\
&\cong \Scalar \oplus  \Vector(d_{m}) \oplus \Vector(d_{m-1})  \oplus \Matrix_m.
 \end{align*}
All components in the decomposition are irreducible as a tensor product of irreducibles. 

We also need to show that all irreducibles in the decomposition are absolutely irreducible. For $\Scalar$ this is obvious. For all representations  $ \Vector(d_{m})) $ this follows from the fact that these representations can be identified with representations of the permutation group $S_{d_{m-1}}$, which are always absolutely irreducible. To see that $\Matrix_m $ is absolutely irreducible, we note that
$\Matrix_m$ is the set of $d_{m-1}\times d_m $ real matrices whose rows and columns all sum to zero, and its complexification $c\Matrix_m$ is the set of \emph{complex} matrices whose rows and columns all sum to zero. $c\Matrix_m$ is the tensor product of $ c\Vector(d_{m-1}) $ and $ c\Vector(d_{m}) $, the complexification of $\Vector(d_{m-1}) $ and $ \Vector(d_{m}) $, respectively. Since these spaces are irreducible, so is their tensor product.

\subsection{Decomposition of weight spaces}  \label{app:Wdecompose}
We explain here how each matrix $A \in \W_m, m=2,\ldots,M-1$ can be written as a sum of elements from its irreducible decomposition. For convenience denote $a=d_{m}, b=d_{m-1} $. We want to write $A$ as a sum of elements from the spaces
$$\Scalar \cong \{c1_{a\times b} \}, \quad \V_{m-1}\cong \{1_ac^T| c\in \RR^b, c^T1_b=0 \}, \quad \V_{m}\cong \{r1_b^T| r\in \RR^a, r^T1_a=0 \} $$
and 
$$\Matrix_m=\{C\in \RR^{a\times b}, C1_b=0_b, C^T1_a=0_a \} $$

As a first step, we take our given $A\in \W_m$ and write it as 
$$A=\bar{A}1_{n\times n}+(A-\bar{A}1_{n\times n}) $$
where $\bar{A}$ is the average of all elements of $A$, and $\bar{A}1_{n\times n}$ is in $\Scalar$. Next, we need to decompose $(A-\bar{A}1_{n\times n}) $ which we denote by $B$. Note that $\sum_{ij}B_{ij}=0$. 

Let $r$ and $c$ be the vectors describing the average of the rows and columns of $B$, respectively: 
$$r=\frac{1}{b}B1_b, \quad c=\frac{1}{a}B^T1_a $$
Note that $r^T1_a=\frac{1}{b}\sum_{ij}B_{ij}=0 $ and similarly $c^T1_b=0$ and thus $1_ac^T\in \V_{m-1} $ and $r1_b^T \in \V_m $. Now let us define
$$C=B-1_ac^T- r1_b^T$$
We can directly verify that $C1_b=0_b$ and $C^T1_a=0_a$. Therefore $C$ is in $\Matrix_m$ and we obtained the decomposition we wanted.
 \newpage
\section{Proofs for Wreath Products}\label{app:wreath}
To conclude the proof of \cref{thm:wreath_irreps}, we need to prove the following lemma
\begin{lemma}
\label{power_irred}
Let $\V$ be an irreducible representation of $\G$, and assume the action of $\G$ on $\V$ is not trivial. Then $\V^{ k}$ is an irreducible representation of $\G \wr S_k$. 
\end{lemma}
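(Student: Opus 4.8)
The plan is to first forget the $S_k$ factor and analyze $\V^k$ purely as a module over the base group $\G^k=\G\times\cdots\times\G\le\G\wr S_k$ (take $\tau=e$ in \eqref{eq:wreath_action}), bringing $S_k$ back in only at the end. As a $\G^k$-module, $\V^k$ visibly splits as $\V^k=\V_{(1)}\oplus\cdots\oplus\V_{(k)}$, where $\V_{(i)}$ is the $i$-th coordinate copy of $\V$ (the other factors of $\G^k$ acting trivially on it). I would first check that each $\V_{(i)}$ is $\G^k$-irreducible: a $\G^k$-invariant subspace of $\V_{(i)}$ is in particular invariant under the $i$-th factor, under which $\V_{(i)}\cong\V$ is irreducible by hypothesis. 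The crucial observation, and the place where the non-triviality assumption enters, is that $\V_{(1)},\ldots,\V_{(k)}$ are pairwise \emph{non-isomorphic} as $\G^k$-modules: restricting to the $i$-th factor turns $\V_{(i)}$ into $\V$ but turns $\V_{(j)}$ with $j\ne i$ into a trivial module, and $\V$ is not trivial precisely because $\G$ acts on it non-trivially. (If $\V$ were trivial this step would fail, and $\V^k\cong\RR^k$ would carry the DeepSets decomposition of \cref{sub:deepsets} rather than being irreducible, so the hypothesis is genuinely needed here.)

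Consequently $\V^k$ is a multiplicity-free semisimple $\G^k$-module, and I would invoke the standard fact that such a module has no invariant subspaces other than the partial sums $\bigoplus_{i\in I}\V_{(i)}$ for $I\subseteq\{1,\dots,k\}$: for each $i$ the coordinate projection carries any $\G^k$-invariant $W$ onto a submodule of the irreducible $\V_{(i)}$, hence onto $0$ or $\V_{(i)}$, which gives $W\subseteq\bigoplus_{i\in I}\V_{(i)}$ for the obvious $I$; conversely, if $i\in I$ then a vector of $W$ with nonzero $i$-th component generates a submodule of $W$ admitting $\V_{(i)}$ as a quotient, hence (by Maschke) containing a copy of $\V_{(i)}$, which by multiplicity-freeness must be $\V_{(i)}$ itself. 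Finally I would restore the $S_k$ action: the element $(\tau,e,\dots,e)$ acts on $\V^k$ by $(v_1,\dots,v_k)\mapsto(v_{\tau(1)},\dots,v_{\tau(k)})$, hence permutes the copies, sending $\V_{(j)}$ to $\V_{(\tau^{-1}(j))}$, and this $S_k$-action on $\{1,\dots,k\}$ is transitive. Therefore a nonzero $\G\wr S_k$-invariant subspace, written as $\bigoplus_{i\in I}\V_{(i)}$ by the previous step, has $I$ nonempty and $S_k$-stable, forcing $I=\{1,\dots,k\}$ and hence equality with $\V^k$.

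I expect the only subtle point to be making the multiplicity-free argument fully rigorous over $\RR$, where $\operatorname{End}_{\G}(\V)$ need not be a field; but since that argument uses only Maschke's theorem, Schur's lemma in its ``isomorphism-or-zero'' form, and the uniqueness of a submodule of $\V^k$ isomorphic to $\V_{(i)}$, all of which hold over an arbitrary field, no real difficulty arises. Everything else, namely that each $\V_{(i)}$ is $\G^k$-invariant, the explicit formula for the $S_k$-action, and its transitivity, is immediate.
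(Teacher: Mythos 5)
Your proof is correct and takes a genuinely different route from the paper's. The paper gives a direct element-chasing argument: starting from an arbitrary nonzero $\G\wr S_k$-invariant subspace $\W$ and a nonzero $(v_1,\dots,v_k)\in\W$ with $v_1\neq 0$, it applies a group element $g$ to the first coordinate only (non-triviality guarantees $gv_1\neq v_1$) and subtracts, producing a nonzero element of the form $(u,0,\dots,0)\in\W$; irreducibility of $\V$ then forces the entire first slot $\V\oplus 0\oplus\cdots\oplus 0$ into $\W$, and $S_k$-invariance finishes. You instead first classify \emph{all} $\G^k$-invariant subspaces of $\V^k$: the coordinate copies $\V_{(i)}$ are irreducible and pairwise non-isomorphic as $\G^k$-modules (non-triviality entering through restriction to a single factor), so $\V^k$ is multiplicity-free and its submodule lattice consists exactly of the partial sums $\bigoplus_{i\in I}\V_{(i)}$; $S_k$-stability of $I$ plus transitivity then forces $I=\emptyset$ or $I=\{1,\dots,k\}$. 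Your approach costs a bit more machinery (the standard multiplicity-free submodule classification, which you correctly observe works over any field via the isomorphism-or-zero form of Schur plus Maschke) but in exchange yields the full $\G^k$-submodule structure rather than just irreducibility over $\G\wr S_k$. The paper's is shorter and more self-contained; yours is more structural and arguably more illuminating about where the non-triviality hypothesis is doing its work.
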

\begin{proof}

The space $\V^{ k}$ is invariant. Let $\W \subseteq \V^{ k}$ be a non-zero $\G^k$ invariant subspace. We need to show that $\W=\V^{ k}$. 

We first claim that there exists an element in $\W$ of the form $(u,0,...,0)$ with $ u \neq 0$.
Let $(v_{1},v_{2},..,v_{k})$ be a non-zero element in $\W$. By applying a permutation if necessary, we can assume without loss of generality that  the first element is non-zero.  If $v_{i}=0, \forall 2\leq i \leq k$, we are done. Otherwise, there exists some $g\in G$ such that $gv_{1}\neq v_{1}$ and so 
\begin{align*}
    (g\cdot v_{1},v_{2},..,v_{k})- ( v_{1},v_{2},..,v_{k})= (gv_{1} - v_{1},0,..,0)\in \W
\end{align*}

The space $\{v\in \V|  (v,0,..,0) \in \W\}$ is invariant under the action of $G$,  and we just saw it is not zero. Since $\V$ is irreducible, this space equals to $\V$, and thus $\W$ contains $\V\oplus0...\oplus 0$. By $S_{k}$ invariance it follows that $ \W $ contains all $k$-tuples with a single non-zero entry, and since such tuples span all of $\V$ we deduce that $\W=\V$. 
\end{proof}

\wreath*
\begin{proof}
It is straightforward to checking that every mapping of the form \eqref{eq:form} is equivariant. We would like to prove the converse statement.

It is also straightforward to verify that the space spanned by the non-Siamese  mappings 
$$(v_1,\ldots,v_k)\mapsto(\sum_{\ell=1}^k \langle v_\ell,e_i \rangle e_j,\ldots,\sum_{\ell=1}^k \langle v_\ell,e_i \rangle e_j $$
does not depend on the choice of the basis $e_1,\ldots,e_s$ for $\V_{fixed}$. In particular, we can assume without loss of generality that $e_1,\ldots,e_s$ are an \emph{orthonormal} basis for $\V_{fixed}$.

Note that $\V$ can be written as a direct sum of two $\G$ invariant spaces $\V=\V_{fixed}\oplus \V_{\perp} $, where $\V_{\perp}$ is the space of vectors in $\V$ which are orthogonal to $\V_{fixed}$ with respect to the given $\G$ invariant inner product. It follows that $\V^k=\V_{fixed}^k\oplus \V_{\perp}^k $ is a decomposition of $\V^k$ into two $\G^k$ invariant spaces.

In this situation, we know that any linear equivariant $L:\V^k \to \V^k$ can be written as $L=L_{fixed}+L_{\perp} $, where $L_{fixed}:\V_{fixed}^k \to \V_{fixed}^k  $ and 
$L_{\perp}:\V_{\perp}^k \to \V_{\perp}^k  $ are equivariant. We know that there are no linear equivariant maps from $\V_{fixed}$ to $\V_{perp}$ or vice versa, since $\V_{perp}$ cannot contain a trivial irreducible subspace. 

Let $L:\V_{\perp} \to \V_{\perp}  $ be a linear equivariant map. We want to show that it is a 'Siamese mapping'. Denote 
$$L(v_1,\ldots,v_k)=(L_1(v_1,\ldots,v_k), \ldots,L_k(v_1,\ldots,v_k))  ,$$
We define $\hat L:\V_{\perp} \to \V_{\perp}$ by 
$$\hat L(v)=L_1(v,0,\ldots,0). $$
Note that $\hat L$ is $\G$ equivariant because 
$$\hat L(gv)=L_1(gv,0,\ldots,0)=g L_1(
v,0,\ldots,0)=g\hat L(v), \quad \forall g\in \G .$$
Next, note that for every $v_2,\ldots,v_k \in \V_{\perp}$, we have that 
$$L_1(0,v_2,\ldots,v_k)= L_1(g\cdot 0,v_2,\ldots,v_k)=g\cdot L_1(0,v_2,\ldots,v_k) .$$
Thus, $ L_1(0,v_2,\ldots,v_k)\in \V_{\perp}^k$, and since it is also in $\V_{fixed}^k$ we deduce that it is equal to zero. We deduce that 
$$L_1(v_1,\ldots,v_k)=L_1(v_1,0,\ldots,0)=\hat L(v_1) $$
By $S_k$ equivariance we can deduce that 
$$L_j(v_1,\ldots,v_j \ldots )=L_1(v_j,\ldots,v_1,\ldots)=\hat L(v_j) $$
and we can also show, as we did for $L_1$, that 
$$L_j(v_1,\ldots,v_j \ldots )=L_j(0,\ldots,0,v_j,0\ldots) .$$
Thus $L$ is a Siamese mapping induced from $\hat  L$, that is 
$$L(v_1,\ldots,v_k)=\left(\hat L(v_1),\ldots, \hat L(v_k) \right) $$

Now, let us consider linear equivariant mappings from $\V_{fixed}^k$ to itself. Denote  $\Scalar_i=\{ce_i, c\in \RR \}$, and note that an irreducible decomposition of $\V_{fixed}$ is given by  
$$\V_{fixed}=\Scalar_1\oplus \ldots \oplus \Scalar_s, $$
and $\V_{fixed}^k$ can be written as a direct sum of the invariant subspaces $\Scalar_i^k$ for $i=1,\ldots,s$. The action of $\G^k$ on  $\Scalar_i^k$ can be identified with the action of $S_k$ on these spaces, which is then isomorphic to the action of $S_k$ on $\RR^k$, via the isomorphism $\psi_i:\Scalar_i^k \to \RR^k $ given by
$$\psi_i(v_1,\ldots,v_k)=(\langle v_1,e_i\rangle, \ldots, \langle v_1,e_k\rangle) $$
Note that since we assumed without loss of generality that $e_1,\ldots,e_s$ is an orthonormal basis,  $\psi_i$ is zero on all other components of $\V^k$. Similarly we can define an isomrphism   $\tilde\psi_i:\RR^k\to \Scalar_i^k$  by 
$$\tilde \psi_i(x_1,\ldots,x_k)=(x_1e_i,\ldots,x_ke_i). $$

As discussed in \cref{sub:deepsets}, an equivariant mapping from $\RR^k$ to itself can be written as a linear combination of the identity mapping $x\mapsto x$ (which is a Siamese mapping), and the mapping 
$$Tx=\left(\sum_{i=1}^n x_i, \ldots, \sum_{i=1}^n x_i \right) $$
Accordingly, a mapping from $\Scalar_i^k$ to $\Scalar_j^k$ is a linear combination of a Siamese map and the mapping $\tilde \psi_j \circ T\circ \psi_i$ which is given by 
$$\tilde\psi_j \circ T\circ \psi_i(v_1,\ldots,v_k)=(\sum_{\ell=1}^k \langle v_\ell,e_i \rangle e_j,\ldots,\sum_{\ell=1}^k \langle v_\ell,e_i \rangle e_j) .$$
This concludes the proof.

\end{proof}

\subsection{Beyond $S_k$}\label{app:beyond}
We now generalize the results from Theorem \ref{characterization} to the case where $S_k$ is replaced with any subgroup $H\leq S_k$ which acts on $\{1,\dots,k\}$ transitively:
\begin{theorem}[Generalization of Theorem \ref{characterization}]
Let $\V$ be a real representation of a finite group $\G$. and
let $e_1,\ldots,e_s$ be a basis to the subspace 
$\V_{fixed}$. Let $\langle \cdot, \cdot \rangle$ be a $\G$ invariant inner product on $\V$.

Let $H$ be a subgroup of $ S_k$ which acts on $\{1,\dots,k\}$ transitively. Let 
$T_1, T_2,\ldots,T_h $ be a basis for the $H$-equivariant mappings from $\RR^k$ to itself, where $T_h $ is the identity mapping.

Then every $\G \wr H $ linear equivariant map $L:\V^{k} \to \V^{k} $ is of the form
\begin{align*}
L(v_1,\ldots,v_k)&=\sum_{\ell=1}^{h-1}\sum_{i,j=1}^s a_{ij\ell} \cdot (\tilde \psi_j \circ T_\ell \circ \psi_i)(v_1,\ldots,v_k)\\
&+ \left(\hat{L}(v_1),\ldots ,\hat{L}(v_k) \right)\end{align*}
where $\hat L:\V \rightarrow \V$ is a linear equivariant map, and $a_{ij\ell}$ are real numbers. Conversely, every linear mapping of the form defined in \eqref{eq:form} is equivariant. 
\end{theorem}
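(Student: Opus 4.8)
The plan is to follow the proof of Theorem~\ref{characterization} essentially line by line, replacing every use of the full symmetric group $S_k$ by transitivity of $H$, and keeping track of where the basis $T_1,\dots,T_h$ of $H$-equivariant self-maps of $\RR^k$ enters. First I would use the $\G$-invariant inner product to split $\V=\V_{fixed}\oplus\V_\perp$ into $\G$-invariant subspaces, so that $\V^k=\V_{fixed}^k\oplus\V_\perp^k$ is a $\G\wr H$-invariant decomposition. Restricting any $\G\wr H$-equivariant $L:\V^k\to\V^k$ to the subgroup $\G^k\le\G\wr H$ and using that $\V_\perp$ has no nonzero $\G$-fixed vector (hence $\V_\perp^k$ no nonzero $\G^k$-fixed vector, while $\G^k$ acts trivially on $\V_{fixed}^k$), one sees there is no nonzero equivariant map between the two summands, so $L=L_{fixed}+L_\perp$ with $L_{fixed}$ on $\V_{fixed}^k$ and $L_\perp$ on $\V_\perp^k$; nothing beyond Lemma~\ref{power_irred}, whose own proof already used only transitivity, is needed here.

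Next I would show $L_\perp$ is Siamese exactly as in Theorem~\ref{characterization}: set $\hat L_\perp(v)=(L_\perp(v,0,\dots,0))_1$; acting by elements $(g,e,\dots,e)\in\G^k$ shows $\hat L_\perp$ is $\G$-equivariant and that $(L_\perp(0,v_2,\dots,v_k))_1$ lies in the space of $\G$-fixed vectors of $\V_\perp$, which is $\{0\}$, whence $(L_\perp(v_1,\dots,v_k))_1=\hat L_\perp(v_1)$ by linearity; and for general $j$, transitivity of $H$ supplies $\tau\in H$ with $\tau(1)=j$, so equivariance under $\tau$ with the action \eqref{eq:wreath_action} gives $(L_\perp(v))_j=(L_\perp(\tau\cdot v))_1=\hat L_\perp(v_j)$. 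For $L_{fixed}$, I would take $e_1,\dots,e_s$ orthonormal without loss of generality, so $\V_{fixed}^k=\bigoplus_{i=1}^s\Scalar_i^k$ with $\Scalar_i=\RR e_i$; since $\G^k$ acts trivially on each $\Scalar_i^k$, the maps $\psi_i$ intertwine the $\G\wr H$-action with the standard $H$-action on $\RR^k$, so $\mathrm{Hom}_{\G\wr H}(\Scalar_i^k,\Scalar_j^k)=\mathrm{Hom}_H(\RR^k,\RR^k)$ is spanned by $\tilde\psi_j\circ T_\ell\circ\psi_i$, $\ell=1,\dots,h$. Writing $L_{fixed}$ in this block form and separating the $\ell=h$ (identity) terms, the sum $\sum_{i,j}a_{ijh}\,\tilde\psi_j\circ\psi_i$ is the Siamese map of $\hat L_0(w):=\sum_{i,j}a_{ijh}\langle w,e_i\rangle e_j$ on $\V_{fixed}$, which is $\G$-equivariant trivially.

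Finally I would add the two pieces: the $\ell<h$ terms give the stated non-Siamese sum, and the two Siamese contributions combine into the Siamese map of $\hat L:=\hat L_0\oplus\hat L_\perp$, a $\G$-equivariant endomorphism of $\V$; conversely every such $\hat L$ is block-diagonal for $\V=\V_{fixed}\oplus\V_\perp$ (again because there is no equivariant mixing), so this is the general form. The converse inclusion is immediate: each $\tilde\psi_j\circ T_\ell\circ\psi_i$ is $H$-equivariant and $\G^k$-equivariant (a map between trivial $\G^k$-representations), hence $\G\wr H$-equivariant, and the Siamese map of any $\G$-equivariant $\hat L$ is equivariant. The only genuinely new point over Theorem~\ref{characterization} is the substitution of ``apply the transposition $(1\,j)\in S_k$'' by ``apply some $\tau\in H$ with $\tau(1)=j$,'' which is precisely where transitivity is used, together with the bookkeeping that $\mathrm{Hom}_H(\RR^k,\RR^k)$ rather than the $2$-dimensional $\mathrm{Hom}_{S_k}(\RR^k,\RR^k)$ now indexes the non-Siamese families; I expect this bookkeeping, not any conceptual difficulty, to be the main thing to get right.
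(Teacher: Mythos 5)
Your argument is correct and matches the paper's intended approach; the paper simply asserts that the proof is identical to that of Theorem~\ref{characterization}. You have correctly identified the two places where the $S_k$-specific reasoning must be adapted: transitivity of $H$ supplies $\tau$ with $\tau(1)=j$ in the Siamese-on-$\V_\perp^k$ step, and the $h$-dimensional $\mathrm{Hom}_H(\RR^k,\RR^k)$ spanned by $T_1,\dots,T_h$ (rather than the two-dimensional $\mathrm{Hom}_{S_k}(\RR^k,\RR^k)$) now indexes the maps between the blocks $\Scalar_i^k$.
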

The proof is identical to the proof of \cref{characterization}.

\end{document}